\documentclass{article}
\PassOptionsToPackage{numbers, sort&compress}{natbib}
\usepackage[dblblindworkshop,final]{neurips_2025}
\workshoptitle{2nd Workshop on Principles of Generative Modeling (PriGM)}
\makeatletter
\renewcommand{\@neuripsyear}{2026}
\renewcommand{\@neuripsordinal}{40th}
\makeatother
\usepackage[T1]{fontenc}
\usepackage[utf8]{inputenc}

\usepackage{amsmath,amssymb,amsthm,booktabs,graphicx,longtable,microtype,float}
\usepackage{url}
\usepackage{hyperref}
\hypersetup{colorlinks=false,pdfborder={0 0 1},
  citebordercolor={0 1 0},urlbordercolor={0 1 1},linkbordercolor={1 0 0}}
\usepackage{orcidlink}
\hypersetup{pdfauthor={Kevin Lee},pdftitle={An Input Skip Can Reparameterize a VAE Without Improving the Generator},pdfsubject={},pdfkeywords={}}
\title{An Input Skip Can Reparameterize a VAE Without Improving the Generator}
\author{%
 Kevin Lee{\hypersetup{pdfborder={0 0 0}}\orcidlink{0009-0004-0388-9260}} \\
 [4pt] University of California at Los Angeles \\
 NASA Jet Propulsion Laboratory \\
 \texttt{kevinlee69720@g.ucla.edu}
}
\newcommand{\E}{\mathbb{E}}
\newcommand{\KL}{D_{\mathrm{KL}}}
\newtheorem{proposition}{Proposition}

\begin{document}
\maketitle

\begin{abstract}
An input skip can reduce reconstruction error while leaving a model's distribution of new samples unchanged. For a Gaussian variational autoencoder (VAE) with constant input skip $0\leq\alpha<1$, a model without a skip can be written in skip form by multiplying its decoder output by $b=1-\alpha$ and its noise variance by $b^2$. The distribution of observations and latent variables and its exact generative scores stay the same, while reconstruction mean squared error (MSE) is multiplied by $b^2$. Separate training can produce different generators because optimization depends on the parameter coordinates. In a Gaussian experiment with known density, changing skip and observation variance together reduces MSE but worsens energy score and the estimate of negative log density from importance sampling. On seven annual \mbox{Deep-ocean} Assessment and Reporting of Tsunamis (DART) records, energy score differences at the same observation variance have both signs on four original records and favor the skip on three records from 2018. Probabilistic principal component analysis (PPCA) has lower mean energy scores than every neural DART fit. With the encoder and decoder kept the same, estimating observation variance from training reconstruction error can also worsen energy score. Assessing generation requires more than reconstruction error.
\end{abstract}

\section{Introduction}
A variational autoencoder (VAE) reconstructs an observation through an encoder and a decoder. To generate a new observation, it samples a latent variable $z$ from the prior and then samples from the decoder $p(x\mid z)$ \citep{KingmaWelling2014AutoEncodingVB}. An input skip adds part of the observation directly to the reconstruction: $\hat x=f_\theta(z)+\alpha x$. Increasing $\alpha$ lets the input account for more of that reconstruction. This can reduce reconstruction error while generative performance stays the same or worsens.\footnote{Available at: \url{https://github.com/magnaprog/vae-skip-reparam}.}

For a constant skip and Gaussian noise, rescaling the decoder and noise preserves the generator while multiplying MSE by $(1-\alpha)^2$. Separate training can change the generator because Adam steps and gradient clipping depend on parameter scaling \citep{KingmaBa2015Adam,Dinh2017SharpMinima}. Comparisons at the same observation variance assess these training outcomes. Dieng et al.\ use a different skip: their links carry $z$ into decoder layers to encourage dependence on the latent variable \citep{Dieng2019SkipVAE}.

National Oceanic and Atmospheric Administration (NOAA) Deep-ocean Assessment and Reporting of Tsunamis (DART) records provide an application in which both oscillations and changes in mean level matter. Their HEIGHT variable expresses bottom pressure as equivalent water column height \citep{NDBCDARTDescription,Meinig2005DARTII}. Earlier work used an adaptive input skip and iterative reconstruction at station 23461 in 2022 \citep{Lee2024DARTPredecessor}. Here a constant skip isolates the effect of feeding the input into reconstruction. Models generate windows of 192 consecutive observations for comparison with later data. Observation variance is varied separately because it sets the relative weight of reconstruction and Kullback--Leibler (KL) divergence during training \citep{Rybkin2021CalibratedDecoders}.

\section{Input skip and generative evaluation}
Let $x\in\mathbb{R}^d$, $0\leq\alpha<1$, and $b=1-\alpha$. The encoder $q_\phi(z\mid x)$ approximates the posterior and supplies latent samples for importance sampling. Reconstruction MSE is $d^{-1}\|x-f_\theta(\E_q Z)-\alpha x\|^2$. If $f_\theta=b g_\theta$, then $x-f_\theta(z)-\alpha x=b(x-g_\theta(z))$, so squared error scales by $b^2$, equal to $0.04$ at $\alpha=0.8$. Dividing reconstruction MSE by $b^2$ gives MSE in observation coordinates.

The implicit generative model is $X=\alpha X+f_\theta(Z)+\epsilon$, where $Z\sim\mathcal{N}(0,I)$ and $\epsilon\sim\mathcal{N}(0,\sigma_r^2I)$ are independent. Solving for $X$ gives its conditional density and a rule for generating samples:
\begin{equation}
 p_\alpha(x\mid z)=b^d\mathcal{N}(bx;f_\theta(z),\sigma_r^2I),
 \qquad X=\frac{f_\theta(Z)+\epsilon}{b}.
\label{eq:density}
\end{equation}
Thus, the conditional observation mean is $g_\theta(z)=f_\theta(z)/b$ and observation variance is $\sigma_x^2=\sigma_r^2/b^2$. The evidence lower bound (ELBO) is a variational lower bound on $\log p_\alpha(x)$.

\begin{proposition}
\label{prop:equivalence}
Consider a model with conditional density $\mathcal{N}(x;g_\theta(z),\sigma_0^2I)$, where $\sigma_0^2>0$, prior $p(z)=\mathcal{N}(0,I)$, and encoder $q_\phi(z\mid x)$. Set the residual decoder to $f=b\,g_\theta$ and its variance to $\sigma_r^2=b^2\sigma_0^2$, where $b>0$. The distribution of $(X,Z)$ is unchanged, as are the marginal $p_\alpha(x)$ and exact posterior $p_\alpha(z\mid x)$. Using the same encoder preserves the ELBO; using the same latent samples also preserves each importance weight $p(x\mid z)p(z)/q_\phi(z\mid x)$. Reconstruction MSE scales by $b^2$.
\end{proposition}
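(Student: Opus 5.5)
The plan is to reduce every claim to a pointwise identity between the two conditional densities, using \eqref{eq:density}. First I verify that, with $f=b\,g_\theta$ and $\sigma_r^2=b^2\sigma_0^2$, the skip-form conditional density equals the original one for every $x$ and $z$. Writing the Gaussian out,
\[
b^d\,\mathcal{N}(bx;\,b\,g_\theta(z),\,b^2\sigma_0^2 I)
= b^d (2\pi b^2\sigma_0^2)^{-d/2}\exp\!\Bigl(-\tfrac{\|bx-b\,g_\theta(z)\|^2}{2b^2\sigma_0^2}\Bigr)
=\mathcal{N}(x;\,g_\theta(z),\,\sigma_0^2 I),
\]
because the factors $b^d$ and $b^{-d}$ cancel and $b^2$ cancels inside the exponent. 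This uses $b>0$ and $\sigma_0^2>0$, so that all densities are nondegenerate. As a cross-check from the generative side, $X=(b\,g_\theta(Z)+\epsilon)/b=g_\theta(Z)+\epsilon/b$ with $\epsilon/b\sim\mathcal{N}(0,\sigma_0^2I)$ gives the same law.

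Since the prior $p(z)$ is the same in both models, the joint densities $p(z)p_\alpha(x\mid z)$ and $p(z)\mathcal{N}(x;g_\theta(z),\sigma_0^2I)$ agree pointwise, so the distribution of $(X,Z)$ is unchanged. Integrating out $z$ gives equal marginals $p_\alpha(x)$. The exact posterior is the joint divided by the marginal, which is positive because the Gaussian likelihood is positive, so it is also unchanged. For the ELBO, $\E_{q_\phi(z\mid x)}[\log p(x\mid z)+\log p(z)-\log q_\phi(z\mid x)]$ depends on the model only through $p(x\mid z)$, which is identical; with the same $q_\phi$ the integrand agrees pointwise, so the ELBO is preserved. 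The same pointwise equality shows that each importance weight $p(x\mid z_k)p(z_k)/q_\phi(z_k\mid x)$ is identical for a fixed latent sample $z_k$. Hence any importance-sampling estimate of $\log p_\alpha(x)$ built from those samples is also identical.

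For reconstruction MSE I invoke the identity stated just before the proposition. With $f_\theta=b\,g_\theta$ we have $x-f_\theta(\E_qZ)-\alpha x=b\,(x-g_\theta(\E_qZ))$, so the squared norm scales by $b^2$. The comparison is to the original model's MSE $d^{-1}\|x-g_\theta(\E_qZ)\|^2$ with the same encoder, so $\E_qZ$ is the same in both models.

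I expect the only delicate step to be bookkeeping rather than analysis. The Jacobian factor $b^d$ in \eqref{eq:density} must be kept in the density, because dropping it would break equality of the marginals and ELBO by the constant $d\log b$. I also need to state explicitly that the ELBO and the importance weights use the observation-space density $p_\alpha(x\mid z)$, not the residual-space density $\mathcal{N}(bx;f,\sigma_r^2I)$.
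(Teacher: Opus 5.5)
Your proof is correct and follows essentially the same route as the paper, which also derives the pointwise identity $b^d\,\mathcal{N}(bx;b\,g_\theta(z),b^2\sigma_0^2I)=\mathcal{N}(x;g_\theta(z),\sigma_0^2I)$, via the change of variables $r=bx$ with Jacobian $b^d$. From that identity and the shared prior and encoder, it deduces equality of the joint, marginal, posterior, ELBO, and importance weights, and it gets the MSE scaling from $\|bx-b\,g(z)\|^2=b^2\|x-g(z)\|^2$. Your explicit expansion of the Gaussian and the generative check $X=g_\theta(Z)+\epsilon/b$ are just a hands-on version of that Jacobian argument.
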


The proof uses the change of variables $x$ to $bx$ with its Jacobian factor (Appendix A). Table~\ref{tab:conditions} separates skip and observation variance. Separate fits share Adam settings and the gradient norm limit in residual coordinates, permitting different training paths even from equal initial observation means.

\begin{table}[t]
\centering
\caption{Skip and variance conditions. The high variance is $25\sigma_0^2$, where $25=1/(1-0.8)^2$. The original comparison changes both factors, from no skip at base variance to skip at high variance.}
\label{tab:conditions}
\begin{tabular}{lcc}
\toprule
Skip coefficient & Base variance: $\sigma_0^2$ & High variance: $25\sigma_0^2$\\
\midrule
$\alpha=0$ & No skip, base variance & No skip, high variance\\
$\alpha=0.8$ & Skip, base variance & Skip, high variance\\
\bottomrule
\end{tabular}
\end{table}

For each observation, training uses one sample from the encoder distribution to estimate the expectation in the negative ELBO,

\begin{equation}
\mathcal{L}_\alpha
=-\E\bigl[\log p_\alpha(x\mid z)\bigr]+\KL\bigl(q_\phi(z\mid x)\Vert p(z)\bigr),
\qquad z\sim q_\phi(z\mid x),
\label{eq:objective}
\end{equation}

using the density in Eq.~\ref{eq:density}, including its normalization, and KL coefficient $\beta=1$. Appendix A derives the corresponding squared reconstruction error and KL terms.

\section{Experiments}
\subsection{Data}
The Gaussian experiment has eight independent normal latent factors, orthogonal sinusoidal loadings in 192 coordinates, and isotropic noise of standard deviation 0.25. NOAA describes instrument measurements every 15 s and routine T=1 values 15 minutes apart \citep{NDBCDARTDescription,Meinig2005DARTII,NDBC2023WebDataGuide}. The analysis uses T=1 observations from NOAA annual files \citep{NDBCDARTHistoricalData}, without despiking, detrending, or detiding. Each window has 192 samples at 900 s spacing, spanning 47 h 45 min.

Four original records use equal window weights; three training partitions span deployments. Three records from 2018 use overlapping test windows within NOAA deployment periods. The choice of 44402/2018 followed examination of its 2016 file. Stations 21415 and 21418 were chosen in identifier order among eligible records before neural results. Their windows start every four hours in Coordinated Universal Time (UTC) with calendar 60/20/20 splits and exclusions of two days on each side of each split date. Each of six UTC start hours has total weight $1/6$. The appendix gives station records, counts, and selection criteria.

\subsection{Models and comparisons}
Each record uses four conditions and three training seeds. Within each 2018 record and seed, conditions share initial observation means, minibatches, and encoder sampling noise. Encoders use 16 latent coordinates; PPCA has at most 16 loading directions \citep{TippingBishop1999ProbabilisticPCA}. PPCA determines base variance, using a validation grid for 2018 before neural fitting. Original models use weights chosen by validation. All 36 models for 2018 complete 10,000 Adam steps, with evaluation of final weights and weights chosen by validation.

\subsection{Evaluation}
The energy score measures agreement between a model distribution and observations through Euclidean distances \citep{GneitingRaftery2007ProperScoringRules}. The name comes from energy statistics and an analogy with potential energy \citep{RizzoSzekely2016EnergyDistance}. For each observation, the score is the mean distance from model samples to that observation minus half the mean distance between independent model samples. Lower scores indicate better agreement. For DART height $h$, the transformation $u=(h-m)/s$ uses the training mean $m$ and standard deviation $s$ to give dimensionless coordinates. Scores are computed in these coordinates for full windows, for windows after subtraction of their own means, and for time differences across several lags. Mean subtraction removes the level of each window; time differences measure changes over the chosen lags. DART evaluation uses three independent streams of 1,024 generated windows, with the same latent prior samples and standard normal noise across conditions. Each comparison pairs conditions with the same training seed. Medians and full ranges across three seeds describe variation among fits; ranges across sampling streams show Monte Carlo variation. Negative importance weighted (IW) log density estimates use $K=256$ and three streams \citep{BurdaGrosseSalakhutdinov2016IWAE}. Low effective sample size (ESS) leaves their accuracy uncertain.

\subsection{Observation variance estimation}
With encoder and decoder weights unchanged, observation variance is estimated as
\begin{equation}
\hat v=\frac{1}{d}\sum_{i\in\mathrm{train}}w_i\,\widehat{\E}_{q_\phi(z\mid x_i)}\|x_i-g_\theta(z)\|^2
\label{eq:variance_refit}
\end{equation}
using three independent streams of 32 samples from the encoder per training window and training weights that sum to one. For these samples, the estimate minimizes the expected Gaussian reconstruction term, including the log normalizer. Rybkin et al.\ optimize encoder and decoder weights as well as observation variance \citep{Rybkin2021CalibratedDecoders}. In this calculation the network weights stay the same, and the new variance is used for evaluation and sample generation.

\section{Results}
\subsection{Synthetic Gaussian experiment}
Changing both skip and observation variance, from no skip at base variance to skip at high variance, reduces median reconstruction MSE from 0.0616 to 0.00368 across three runs. Over the same comparison, median estimated negative IW log density increases from 0.133 to 1.201 nats per coordinate, and median energy score increases from 5.723 to 7.123. Both generative scores worsen as reconstruction error falls. At base observation variance, the skip model has median MSE 0.00250 and median negative log estimate 0.137; the corresponding exact negative log scores are 0.125 for the known generating Gaussian and 0.127 for PPCA. These results come from separate fits. Rescaling the decoder and noise of one fitted model to introduce the skip at base variance would preserve its generator and exact proper scores.

The high observation variance sets a noise floor before training. Its value, $25\sigma_0^2\approx1.53929$, exceeds the data variance along 185 of 192 covariance directions. Since independent observation noise gives $\operatorname{Cov}(X)=\operatorname{Cov}(g(Z))+\sigma_x^2I\succeq\sigma_x^2I$, those models necessarily have excess variance in these directions. The relation $\succeq$ means that the matrix difference is positive semidefinite. Increasing decoder flexibility leaves this noise contribution in place. Score differences also depend on fitting and, for log estimates, approximate inference.

\begin{figure}[t]
\centering
\includegraphics[width=\linewidth,height=0.81\textheight,keepaspectratio]{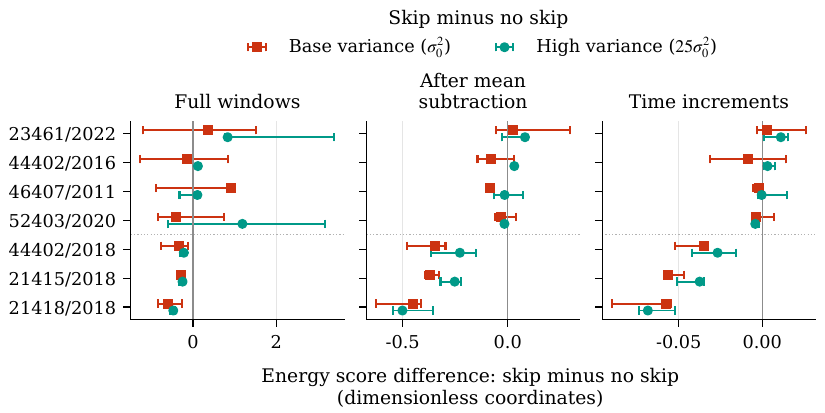}
\caption{DART energy score differences between skip and no skip at base variance $\sigma_0^2$ (squares) and high variance $25\sigma_0^2$ (circles). Negative values favor the skip. Scores use dimensionless coordinates. Points give median differences across three training seeds; bars span their full range. Scores average three streams per fit. Original records use network weights chosen by validation and equal window weights; 2018 records use final weights and UTC start hour weights. Table~\ref{tab:dart_contrasts} gives sampling variation.}
\label{fig:comparison}
\end{figure}

\subsection{DART case studies}
Energy score differences take both signs across the four original records. For 46407/2011 at base variance, subtracting each window's mean reverses the sign of the median difference between skip and no skip (Figure~\ref{fig:comparison}). For all three 2018 cases, the skip lowers all three mean energy scores at both observation variances. Within 21415/2018 and 21418/2018, energy score differences for full windows take both signs across intervals of 14 days and training seeds (Figure~\ref{fig:dart_intervals_additional}). PPCA has lower mean energy score than every neural fit on all seven cases and three scores, including at the same observation variance (Figure~\ref{fig:dart_ppca}). The leading 16 covariance eigenvectors account for almost all training variance, making PPCA a useful linear baseline (Appendix~\ref{sec:reference_distributions}). Its advantage combines differences in model class and fitting procedure.

\subsection{Estimating observation variance}
With encoder and decoder weights kept the same, estimating variance from training reconstruction error increases observation variance in all 42 fits trained at $\sigma_0^2$ and decreases it in all 42 fits trained at $25\sigma_0^2$. All three mean energy scores improve in the first group and worsen in the second (Figure~\ref{fig:dart_variance_refit}). After averaging streams within each fit, the absolute change in half the mean distance between model samples exceeds that in their mean distance to observations. Subtracting the former term determines the direction of the score change (Appendix~\ref{sec:variance_reset}). Minimizing the Gaussian reconstruction loss on training windows can worsen energy score on test windows.

\section{Conclusion}
For a Gaussian VAE with a constant input skip, exact rescaling can make reconstruction MSE arbitrarily small while preserving the generator. Separate DART fits give energy score differences of both signs, and PPCA has lower mean scores than every neural fit. Therefore, reconstruction error needs to be interpreted together with observation variance and generative scores.

\clearpage
\begin{ack}
This work extends research carried out at the Jet Propulsion Laboratory, California Institute of Technology, under a contract with the National Aeronautics and Space Administration. Parts of the development and verification were conducted with the Autonomous Research and Engineering System (ARES) agentic platform and the ARES CLI, both developed by the author. The author extends his sincere gratitude to Dr.\ Felix W.\ Landerer and Dr.\ Yuhe Tony Song from JPL for their guidance and support. The National Oceanic and Atmospheric Administration's National Data Buoy Center provided the DART observations, and the agency's National Centers for Environmental Information provided deployment records. The author has no competing interests to declare.
\end{ack}

\bibliographystyle{unsrtnat}
\bibliography{references}

\clearpage
\appendix
\raggedbottom
\setlength{\LTcapwidth}{\linewidth}
\section{Density, MSE, and energy score}
Consider the objective $\E\|x-f_\theta(z)-\alpha x\|^2+\beta_\alpha R$, where $R=\E_x\KL(q_\phi(z\mid x)\Vert p(z))$. Its reconstruction term sums squared residuals over observation coordinates, before division by $d$. Substituting $f_\theta=b g_\theta$ gives
\begin{equation}
\frac{\E\|x-f_\theta(z)-\alpha x\|^2+\beta_\alpha R}{b^2}
=\E\|x-g_\theta(z)\|^2+\frac{\beta_\alpha}{b^2}R.
\label{eq:scale}
\end{equation}
At $\alpha=0.8$, the reconstruction term is multiplied by $0.04$ and the relative KL coefficient becomes $\beta_\alpha/b^2$, equal to $25\beta_\alpha$. Keeping the KL coefficient constant increases its weight relative to squared error. Training uses the negative ELBO of the normalized density with KL coefficient one. To express this objective as a sum of squared residuals and a KL penalty, drop terms constant in the training parameters and multiply by $2\sigma_r^2$; the KL coefficient is then $2\sigma_r^2$. Averaging squared residuals over coordinates instead gives coefficient $2\sigma_r^2/d$. Equivalently, in negative ELBO expressed as a multiple of MSE plus KL, the MSE coefficient is $d/(2\sigma_r^2)$.

\begin{proof}[Proof of Proposition~\ref{prop:equivalence}]
The transformation $r=bx$ has Jacobian determinant $b^d$. After mapping, the conditional Gaussian for $r$ has mean $b\,g(z)$ and covariance $b^2\sigma_0^2I$. Multiplying this density by the Jacobian gives $\mathcal{N}(x;g(z),\sigma_0^2I)$. Combining it with the same prior preserves the distribution of observations and latent variables. Hence, the marginal and exact posterior also stay the same. The common encoder gives the same ELBO. For the same latent samples, each importance weight is also identical. Finally, $\|b x-b g(z)\|^2=b^2\|x-g(z)\|^2$ holds for every $z$, including the common encoder mean, and after averaging over the common encoder distribution.
\end{proof}

Reconstruction MSE evaluates the decoder at $z=\E_q Z$. For a nonlinear decoder, this value can lie above or below the MSE averaged over the encoder distribution, which enters the Gaussian reconstruction term of the ELBO. Both errors scale by $b^2$ under the mapping. Positive scaling preserves their rankings, including anomaly rankings defined by the chosen error. It also scales a threshold at the mean error plus a constant multiple of its standard deviation by the same factor, leaving classification decisions unchanged. These statements apply to two representations of one model. For models trained separately, assessing generation requires samples or densities in addition to reconstruction error.

Integrating Eq.~\ref{eq:density} over the latent prior gives the marginal density. Decoding the encoder mean evaluates the decoder at one latent value. Since the mapping leaves the marginal density unchanged, it preserves every strictly proper score of that density, including log score and, for distributions with finite first moments, energy score \citep{GneitingRaftery2007ProperScoringRules}. The finite importance estimate is also identical when the two representations use the same encoder distribution and the same latent samples.

The study concerns a constant scalar $\alpha<1$. The singular case $\alpha=1$, input dependent skips, and matrix coefficients each require separate analysis. The implicit generator samples noise once and solves for $X$. At $\alpha=0.8$ with residual variance one, its conditional variance is 25. An autoregressive process of order one, AR(1), repeatedly multiplies the previous state by 0.8 and adds constant forcing and fresh independent noise. With noise variance one, that process has stationary variance $1/(1-0.8^2)\approx2.778$.
Several VAE constructions illustrate how reconstruction and latent representation can diverge. Attention that passes information without sampling a latent variable can bypass the stochastic representation \citep{Bahuleyan2018VariationalAttention}; equal ELBO values can accompany different representations \citep{Alemi2018FixingBrokenELBO}. For linear conditional and hierarchical VAEs, analyses relate posterior collapse to correlation between inputs and outputs and to encoder variance learned during training \citep{Dang2024BeyondVanilla}. Dieng et al.\ connect $z$ to decoder layers to encourage dependence of $p(x\mid z)$ on the code \citep{Dieng2019SkipVAE}. An input skip instead passes $x$ into its reconstruction, providing a route that can ignore $z$. Generation still uses $z$ and noise, and Eq.~\ref{eq:density} gives its density for the constant skip considered here.

Under suitable assumptions on its regularization and corruption noise, an autoencoder can recover derivatives of the data log density from the reconstruction function it learns \citep{AlainBengio2014WhatRegularizedAutoEncodersLearn}. For the constant input skip, Eq.~\ref{eq:density} gives the generative density through a change of variables, whose Jacobian supplies the normalization.

For a scalar example, take $X\sim\mathcal{N}(0,1)$, $f=0$, and $\sigma_r^2=1$. Raising $\alpha$ from zero to 0.8 reduces expected reconstruction MSE from one to 0.04 and raises expected negative log likelihood from approximately 1.419 to 2.548 nats. Both quantities are population expectations; the likelihood calculation includes the density normalizer.

For each test window, $K$ latent samples from the encoder distribution give the estimate
\begin{equation}
\widehat\ell_K(x)=\log\left[\frac1K\sum_{k=1}^K
\frac{p_\alpha(x\mid z_k)p(z_k)}{q_\phi(z_k\mid x)}\right],
\qquad z_k\sim q_\phi(z\mid x).
\end{equation}
The expectation of this estimate is at most $\log p_\alpha(x)$, although a single realization can exceed it. At finite $K$, its value depends on both the encoder sampling distribution and the marginal density. The proper log score concerns exact marginal density. Log weights are calculated and accumulated in double precision after single precision neural calculations. Effective sample size, $(\sum w_k)^2/\sum w_k^2$, measures concentration of the importance weights. When the encoder poorly approximates the posterior, a few samples can carry almost all the importance weight. Comparing fits changes both the density being estimated and the distribution used to sample latent values.

For normalization $u=(h-m)/s$, with mean $m$ and scale $s$ estimated from training rows, log density in meter coordinates is $\log p_H(h)=\log p_U(u)-d\log s$. Log density comparisons use standardized DART coordinates and original Gaussian coordinates. All logarithms are natural, so log quantities are in nats. A density for the complete time series would also need a model of dependence between windows.

With observation mean $g(z)=f(z)/b$ and variance $v=\sigma_r^2/b^2$, negative ELBO per coordinate is
\begin{equation}
\frac{\mathcal L_\alpha(x)}{d}=\frac12\log(2\pi v)+\frac{\E_q\|x-g(Z)\|^2}{2dv}+\frac{\KL(q\Vert p)}d.
\label{eq:elbo_components}
\end{equation}
Multiplying $v$ by 25 increases the first term by $\log(25)/2$. The second term averages over the encoder distribution; evaluating only at the encoder mean would define a different objective. In expectation, the IW lower bound with $K$ samples is at least as tight as the ELBO. The improvement can differ between models. For the same set of log weights, Jensen's inequality gives $\widehat\ell_K-K^{-1}\sum_k\log w_k\geq0$: the log of the mean importance weight is at least the mean log weight. This compares two functions of the same importance weights. Error relative to exact marginal log density remains unknown; replacing sampled log ratios with analytic KL defines a separate comparison.

For independent samples $y_1,\ldots,y_M$ from the model's marginal distribution, the Gneiting and Raftery energy score has the estimator
\begin{equation}
\widehat{\mathrm{ES}}(x)=\frac1M\sum_i\|y_i-x\|_2
-\frac{1}{M(M-1)}\sum_{i<j}\|y_i-y_j\|_2.
\end{equation}
The second term uses distinct sample indices. If samples are identically distributed and independent of each other and the evaluation observation, the estimate is unbiased for population energy loss. That loss is strictly proper for distributions with finite first moments \citep{GneitingRaftery2007ProperScoringRules}: its expectation is uniquely minimized by the true distribution. Finite ensembles introduce Monte Carlo variability. All conditions use the same sample indices and evaluation seeds.

The Gaussian simulator has 184 covariance eigenvalues equal to $0.0625$ and eight equal to $s_j^2+0.0625$, where $s_j$ decreases linearly from 4 to 1. The original PPCA estimate is $\sigma_0^2\approx0.0615716325$, so the conditions at high variance use $1.5392908133$. Among the eight larger eigenvalues, only the smallest, $1.0625$, lies below this noise floor. The 185 eigenvalues below the floor rule out recovery of the full covariance by any $g(Z)+\epsilon$ with independent $\epsilon\sim\mathcal N(0,\sigma_x^2I)$. This calculation applies to nonlinear decoders as well as linear ones. The covariance restriction identifies excess variance, while its contribution to score differences and the optimality of each fit remain uncertain. The numerical equality checks below use observation variances 0.0625 and 1.5625.

\section{Adam under decoder rescaling}
Let $w$ contain the output kernel and bias for the observation mean, and write the corresponding parameters of the residual decoder as $r=bw$. Let $h_t$ be the gradient at step $t$. At corresponding parameter values, the same minibatches and latent samples give $h_t^r=h_t^w/b$, provided clipping and weight decay are inactive. If this relation holds throughout the preceding steps and the moments start at zero, Adam's averages of gradients and squared gradients, $m_t$ and $s_t$, satisfy $m_t^r=m_t^w/b$ and $s_t^r=s_t^w/b^2$. In TensorFlow 2.15, epsilon is added to the square root of the second moment before bias correction; the correction appears in the step size. For learning rate $\eta_r$, stabilizer $\epsilon_r$, and moment decay coefficients $\beta_1,\beta_2$, an update to the residual parameters gives the following step in $w$:
\begin{equation}
\Delta w_t=-\frac{\eta_r}{b}\frac{\sqrt{1-\beta_2^t}}{1-\beta_1^t}
\frac{m_t^w}{\sqrt{s_t^w}+b\epsilon_r}.
\label{eq:adam_transport}
\end{equation}
To match a step taken directly in $w$ with learning rate $\eta_w$ and stabilizer $\epsilon_w$, exact arithmetic requires $\eta_r=b\eta_w$ and $\epsilon_r=\epsilon_w/b$. If epsilon is negligible, scaling the learning rate to $\eta_r=b\eta_w$ while leaving epsilon unchanged approximates that step. Under this approximation, a constant learning rate for the residual decoder corresponds to a rate five times larger in $w$ when $b=0.2$. Clipping the norm of all gradients after rescaling the output parameters can also change updates elsewhere in the network. These relations follow from parameter rescaling and Adam's rule \citep{Dinh2017SharpMinima,KingmaBa2015Adam}. Their scope is individual steps under the assumptions on gradient history above. Training speed and generative performance of the nonlinear fits require separate evidence.

A numerical check trains pairs of models using parameters for observation means in both, with the input skip used only to calculate reconstruction. Four Gaussian runs each take 10,000 Adam steps and cover both observation variances. With identical implementations and random samples, all 48 parameter and optimizer state tensors agree after every step, including steps with active clipping. Repeating the computations reproduces the saved states and final weights exactly. This verifies equality for the common parameterization and random inputs used in the check. The DART models train separately using parameters of the residual decoder, so the check leaves the cause of their score differences open.

A small encoder KL can occur away from a likelihood optimum. The Gaussian population's largest covariance eigenvalue is $\lambda_1=16.0625$, and seven eigenvalues exceed $v=1.5625$. In a linear Gaussian model with exact inference, changing zero loading to $W=t u_1$, where $u_1$ is a leading covariance eigenvector, changes the population negative log likelihood (NLL) by
\begin{equation}
\Delta\mathrm{NLL}=\tfrac12\left[\log(1+t^2/v)-\frac{\lambda_1t^2}{v(v+t^2)}\right]
=-2.9696\,t^2+O(t^4).
\end{equation}
Under these linear assumptions, moving away from zero loading reduces population negative log likelihood, so zero loading fails the local minimum condition. Lucas et al.\ analyze collapse in linear VAEs with a shared diagonal encoder covariance \citep{Lucas2019LinearVAE}; He et al.\ study inference networks that lag behind the changing model posterior \citep{He2019LaggingInference}. Whether the nonlinear fits here have reached a local optimum remains unresolved.

\section{Windows, splits, and training}
\label{sec:data}
The synthetic basis $U$ alternates cosine and sine columns, $\cos(2\pi kt/192)$ and $\sin(2\pi kt/192)$, for harmonics $k=1,\ldots,4$ at $t=0,\ldots,191$. Scaling by $\sqrt{2/192}$ gives $U^\top U=I_8$. The loading matrix is $L=U\operatorname{diag}(s)$, with eight singular values equally spaced from 4 to 1. Each realization is $X=LZ+0.25\epsilon$, with independent $Z\sim\mathcal{N}(0,I_8)$ and $\epsilon\sim\mathcal{N}(0,I_{192})$. NumPy PCG64 streams 1101, 1102, and 1103 supply training, validation, and test samples, respectively. In each partition, generation samples all latent factors before sampling observation noise.

A separate coordinate check uses a mixture of two Gaussians whose overall covariance equals that of the original Gaussian: replace the first loading 4 by 2 and add an independent random sign times $\sqrt{12}$ along the first basis vector, using random number generator streams 2101 to 2103. The 60 original fits use the Gaussian family and four DART records; the 36 additional fits use the three DART records from 2018. Gaussian observations retain the simulator's dimensionless coordinates, without mean subtraction or rescaling to unit variance.

For the four original DART records, split dates are midnight UTC on April 28 and November 25 for 23461; August 9 and October 21 for 44402; August 12 and October 23 for 46407; and September 21 and October 30 for 52403, in their respective years. For dates $s_1,s_2$, training uses $t<s_1-2$ days, validation uses $s_1+2\text{ days}\leq t<s_2-2$ days, and test uses $t\geq s_2+2$ days. Excluding $[s-2\text{ days},s+2\text{ days})$ around each split keeps observations in separate partitions, although serial dependence remains. Windows begin at each segment's first observation and advance by 16 observations for training and 192 for validation and test. In these four records, values above 9000 indicate missing data. Other eligible extremes remain. Duplicate observed T=1 timestamps are excluded, and a segment ends at an invalid value, another reporting type, a break in the order of source rows, or a time interval other than 900 seconds.

\begin{figure}[tp]
\centering
\includegraphics[width=\linewidth,height=0.81\textheight,keepaspectratio]{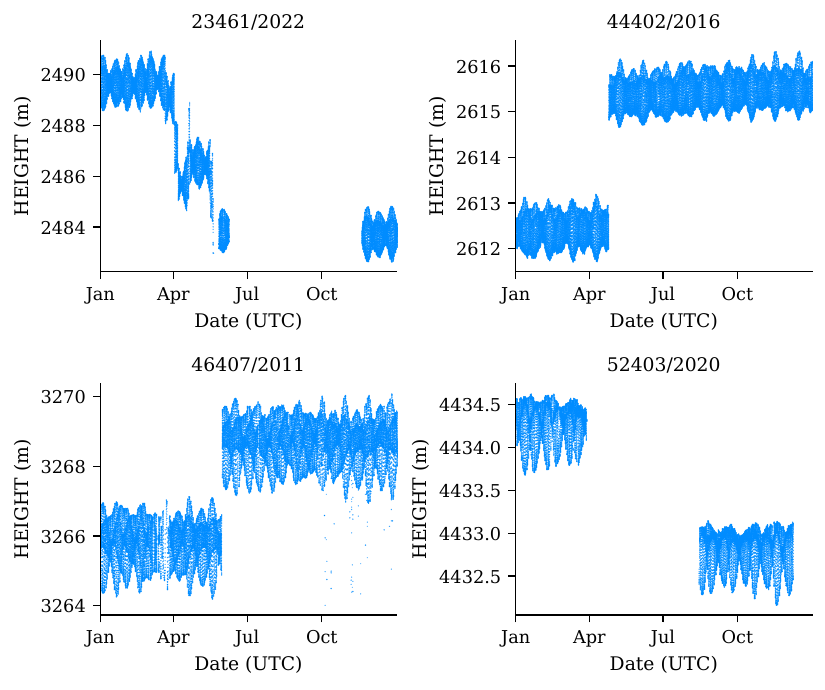}
\caption{T=1 HEIGHT in meters for the four original records, identified by station and year. Points are NOAA HEIGHT observations; blank intervals contain no usable T=1 observations. The steps at 44402 and 46407 occur near the National Centers for Environmental Information (NCEI) deployment dates cited in Appendix~\ref{sec:scope}. At 52403, the levels differ before and after an interval without usable T=1 observations from March to August that includes the catalogue deployment date of 13 August 2020. The time of the level change remains uncertain; the cause of the 23461 spikes is also unresolved. Figures~\ref{fig:dart_ecdf_fresh} and~\ref{fig:dart_spectra_fresh} show feature distributions and spectra for the three 2018 records.}
\label{fig:context}
\end{figure}

The encoder has hidden widths 128/64 with rectified linear unit (ReLU) activations and separate outputs for Gaussian means and log variances. The decoder has hidden widths 64/128 with ReLU activations and a linear residual output. Log variance is restricted to $[-12,12]$, limiting the variational family. Training uses no input clipping, moving baseline, output smoothing, L2 penalty, or KL warmup. Adam uses batches of 64, learning rate $10^{-3}$, at most 10,000 steps, and a global gradient norm cap of 10. Training samples windows uniformly with replacement for the original records and with UTC start hour weights for 2018. Seeds are 42, 1729, and 31415, in the order shown in plots of individual fits. Conditions with the same seed begin with the same weights except that the residual output kernel and bias are multiplied by $b$ to preserve observation means. Comparisons pair fits with the same training seed.

For the original DART records, normalization counts each eligible training row once, including rows in fragments too short to form a window. The standard deviation uses divisor $n$, the number of these rows. PPCA estimates residual variance by averaging covariance eigenvalues outside its loading directions, subject to a positive minimum. For the skip at base variance, this minimum is applied before multiplying the base variance by $b^2$.

Original validation uses 64 importance samples per window and random number generator stream 41001. Validation runs every 250 Adam steps; training stops after eight consecutive checks without improvement and retains the weights with the best validation estimate. Primary evaluation uses stream 42001. A sensitivity check with 1,024 samples uses 42002, and a separate repeat with 256 samples uses 42003. Prior samples use stream 43001. Final network weights are retained separately from the weights chosen by validation. All four conditions use this stopping rule; the iteration chosen by validation and the total number of steps can differ between fits.

\subsection{Additional DART records}
The additional cases use $[2018\text{-}01\text{-}01,2019\text{-}01\text{-}01)$, with splits 219 and 292 days after the start and two days excluded on each side of each split. Windows start at 00:00, 04:00, 08:00, 12:00, 16:00, or 20:00 UTC. Each requires 192 consecutive source rows with finite T=1 HEIGHT values at most 9000 and unambiguous timestamps 900 s apart; any interruption ends a usable sequence. Eligibility requires an interval of at least 240 days and valid routine observations in at least 80\% of all UTC slots at 15 minute spacing across that complete interval. Every partition must include windows from all six start hours. It must also contain a subset of windows that share no source observations: at least 60 training windows, 20 validation windows, and 20 test windows. These subset counts are 101/33/33 for 44402, 104/32/23 for 21415, and 87/32/22 for 21418, in training/validation/test order. Model fitting and evaluation use the larger overlapping sets in Table~\ref{tab:data}. The dates determine the splits; the subset counts measure the available data.

\begin{table}[tp]
\centering
\setlength{\tabcolsep}{3pt}
\caption{Window counts and normalization constants, with routine T=1 DART records identified by station and year. Mean and scale are the training mean and population standard deviation in meters. Normalization counts each eligible training row once for the original DART records; for 2018, it counts each unique training row in an eligible window once. Gaussian data keep their original coordinates, shown as mean zero and scale one. Base variance $\sigma_0^2$ is expressed in squared model coordinates. For 2018, PPCA validation chooses this variance before neural fitting.}
\label{tab:data}
\begin{tabular}{lrrrrrr}
\toprule
Case & Train & Validation & Test & Mean & Scale & $\sigma_0^2$\\
\midrule
Gaussian & 8192 & 2048 & 2048 & 0 & 1 & 0.06157 \\
23461/2022 & 544 & 13 & 17 & 2488.8962 & 1.481 & 0.0001 \\
44402/2016 & 1153 & 34 & 35 & 2613.8301 & 1.518 & 0.0001 \\
46407/2011 & 983 & 26 & 27 & 3266.7902 & 1.48 & 0.0001 \\
52403/2020 & 339 & 17 & 16 & 4433.8446 & 0.7036 & 0.0001
\\
\midrule
44402/2018 & 1150 & 390 & 369 & 2615.7276 & 0.33 & 0.0001 \\
21415/2018 & 1227 & 364 & 215 & 4798.7832 & 0.3153 & 0.0001 \\
21418/2018 & 895 & 378 & 189 & 5726.9972 & 0.2634 & 0.0001
\\
\bottomrule
\end{tabular}
\end{table}

Within each partition, a window starting in UTC hour $j$ receives weight $1/(6n_j)$, where $n_j$ is the number of eligible starts in that hour. Each hour contributes one sixth of the weight to neural sampling, baseline fitting, variance choice, validation, and evaluation. Normalization gives each unique training observation in an eligible window equal weight and uses a population standard deviation. The observation set differs from that used for the original records, and equal observation weights differ from the UTC start hour weights on training windows. A secondary evaluation assigns equal weight to windows using the same scores. The data contain the eligible source measurements, without interpolation or augmentation. Equal weights for UTC start hours balance the clock times; tidal phases also depend on date and constituent frequency.

The 44402/2018 record was chosen after examination of the station's 2016 record. A subsequent search for additional sites excluded the four original stations and required catalogue coverage for all of 2018. Stations were examined in identifier order; 21415 and 21418 were the first two annual records to meet the data requirements above. Both were included before their neural model results were available, and all 36 fits remain in the analysis. The catalogues give deployment periods \citep{NCEIDART44402,NCEIDART21415,NCEIDART21418}. Hardware continuity within these deployment periods remains uncertain.

The skip lowers mean energy scores on all three additional records in each of two checks: evaluation at network weights chosen by validation, and evaluation with equal weight for every window. The original and additional records also differ in calendar splits, normalization data, evaluation stride, training duration, and selection of network weights. Attributing their score differences to deployment history requires accounting for these changes.

For each additional record, PPCA uses at most 16 loading directions, the training window weights, and variance values $10^{-4},5\times10^{-4},2.5\times10^{-3},1.25\times10^{-2},6.25\times10^{-2}$. The variance with the highest exact validation log density, averaged with the validation window weights, is chosen; an exact tie favors the larger variance. All three records select the smallest value, $10^{-4}$, before neural fitting. Test results play no part in the grid choice. Each neural fit then takes 10,000 Adam steps. Every 1,000 steps, energy score on full validation windows is evaluated with 512 prior samples and the validation window weights. The network weights with the lowest score are saved for a secondary evaluation, with ties favoring the earlier step. Within a seed, conditions share training hyperparameters and initial observation means. Adam and gradient clipping act on the parameters of the residual decoder, allowing the training paths to differ.

\section{ELBO terms and Gaussian baselines}
Each test window receives the preceding window's encoder mean, and the first window receives the last window's encoder mean. The change in MSE in observation coordinates measures sensitivity to this reassignment.

For the Gaussian comparison from no skip at base variance to skip at high variance, mean conditional negative log likelihood, averaged over encoder samples, increases from 0.04692 to 1.17735 nats per coordinate. The analytic KL decreases from 0.09338 to 0.02497 nats per coordinate, by less than the NLL increase. Without a skip at high variance, estimated negative IW log density ranges from 1.202 to 1.252 nats per coordinate. Encoder KL measures divergence from the prior, while reassignment measures sensitivity to which encoder mean a window receives. One fit without a skip at high variance has encoder KL of about $0.0003$ nats per window, compared with 4.81 for the skip fit at the same variance. Reassigning encoder means changes MSE in observation coordinates by zero at the recorded precision for that fit without a skip and by 0.405 and 0.453 for the other two fits without a skip at high variance.

For 46407 in 2011, moving from no skip at base variance to skip at high variance lowers mean NLL from 27.0860 to $-0.6335$ and raises mean MSE from 0.006154 to 0.007217. Both averages use the encoder distribution; NLL is in nats per coordinate and MSE is in squared dimensionless coordinates. Equation~\ref{eq:elbo_components} accounts for the different directions through the density normalizer and the division of error by variance. When reconstruction instead decodes the encoder mean, the skip at high variance also has higher MSE in observation coordinates in two of the three comparisons with the same training seed.

The component analysis uses all 60 original models at network weights chosen by validation. The DART extension adds results for the 36 final models from 2018 to the 48 original DART models and keeps the Gaussian analysis. For each test window, three independent streams each provide 256 samples from the encoder. The same samples determine conditional log likelihood, $\log q-\log p(z)$, and importance weights. Encoder KL to the prior is calculated analytically. Table~\ref{tab:components} averages first over test windows with each record's weights, then over three sampling streams, then over three training seeds. Reconstruction from the encoder mean and analytic KL depend only on the inputs and network weights. Negative ELBO uses analytic KL; the Jensen difference uses the log ratios from the actual samples. Exact marginal log densities are available for the Gaussian baselines, but remain unknown for the neural fits.

\clearpage
\begin{table}[H]
\centering
\setlength{\tabcolsep}{1.8pt}
\caption{Reconstruction errors and ELBO terms at the observation variance used for training. The original records use network weights chosen by validation and equal window weights; the 2018 records use final network weights and UTC start hour weights. Raw MSE includes the input skip; division by $b^2$ expresses the error in observation coordinates after decoding the encoder mean. For 2018, raw MSE is calculated as $b^2$ times this error; original fits use the raw MSE saved during evaluation. Expected MSE and conditional negative log likelihood (NLL) average over samples from $q_\phi(z\mid x)$. MSE is in squared model coordinates; log quantities are in nats per coordinate. Each entry averages over test windows with the weights above, then over three sampling streams, then over three fits. Before rounding, NLL plus analytic KL equals negative ELBO. IW estimates negative marginal log density using finite $K$. Base and high denote observation variances $\sigma_0^2$ and $25\sigma_0^2$. The Skip column uses $\checkmark$ for $\alpha=0.8$ and $\times$ for $\alpha=0$.}
\label{tab:components}
\small
\begin{tabular}{lllrrrrrrr}
\toprule
Case & Skip & Variance & \shortstack{Raw\\MSE} & \shortstack{MSE\\$ / b^2$} & \shortstack{Expected\\MSE} & NLL/$d$ & KL/$d$ & \shortstack{Negative\\ELBO/$d$} & \shortstack{Negative\\IW/$d$}\\
\midrule
Gaussian & $\times$ & Base & 0.0616 & 0.0616 & 0.06425 & 0.04692 & 0.09338 & 0.1403 & 0.1332 \\
Gaussian & $\checkmark$ & Base & 0.002496 & 0.06239 & 0.06498 & 0.05287 & 0.09343 & 0.1463 & 0.1373 \\
Gaussian & $\times$ & High & 0.1985 & 0.1985 & 0.2182 & 1.205 & 0.01483 & 1.22 & 1.22 \\
Gaussian & $\checkmark$ & High & 0.003683 & 0.09209 & 0.1316 & 1.177 & 0.02497 & 1.202 & 1.201 \\
23461/2022 & $\times$ & Base & 0.0008939 & 0.0008939 & 0.0008945 & 0.7864 & 1.447 & 2.233 & 2.065 \\
23461/2022 & $\checkmark$ & Base & 0.000102 & 0.002551 & 0.002552 & 9.073 & 1.372 & 10.44 & 10.06 \\
23461/2022 & $\times$ & High & 0.001201 & 0.001201 & 0.001207 & -1.835 & 0.6391 & -1.196 & -1.234 \\
23461/2022 & $\checkmark$ & High & \ensuremath{8.379\times10^{-5}} & 0.002095 & 0.002121 & -1.653 & 0.5324 & -1.12 & -1.228 \\
44402/2016 & $\times$ & Base & 0.0003423 & 0.0003423 & 0.0003479 & -1.947 & 0.5698 & -1.377 & -1.573 \\
44402/2016 & $\checkmark$ & Base & \ensuremath{1.509\times10^{-5}} & 0.0003771 & 0.0003828 & -1.772 & 0.5071 & -1.265 & -1.58 \\
44402/2016 & $\times$ & High & 0.0003269 & 0.0003269 & 0.0003903 & -1.999 & 0.1313 & -1.867 & -1.89 \\
44402/2016 & $\checkmark$ & High & \ensuremath{1.723\times10^{-5}} & 0.0004308 & 0.0004851 & -1.98 & 0.1266 & -1.853 & -1.894 \\
46407/2011 & $\times$ & Base & 0.006143 & 0.006143 & 0.006154 & 27.09 & 0.553 & 27.64 & 27 \\
46407/2011 & $\checkmark$ & Base & 0.000274 & 0.00685 & 0.006867 & 30.65 & 0.5804 & 31.23 & 30.12 \\
46407/2011 & $\times$ & High & 0.006611 & 0.006611 & 0.006824 & -0.712 & 0.2272 & -0.4848 & -0.6253 \\
46407/2011 & $\checkmark$ & High & 0.0002783 & 0.006957 & 0.007217 & -0.6335 & 0.2009 & -0.4326 & -0.6198 \\
52403/2020 & $\times$ & Base & 0.0009731 & 0.0009731 & 0.0009804 & 1.216 & 0.5951 & 1.811 & 1.38 \\
52403/2020 & $\checkmark$ & Base & \ensuremath{7.928\times10^{-5}} & 0.001982 & 0.00199 & 6.265 & 0.5363 & 6.801 & 5.821 \\
52403/2020 & $\times$ & High & 0.001278 & 0.001278 & 0.001487 & -1.779 & 0.1976 & -1.582 & -1.686 \\
52403/2020 & $\checkmark$ & High & \ensuremath{9.723\times10^{-5}} & 0.002431 & 0.002613 & -1.554 & 0.2245 & -1.33 & -1.533 \\
44402/2018 & $\times$ & Base & 0.001983 & 0.001983 & 0.001987 & 6.247 & 0.5841 & 6.831 & 6.402 \\
44402/2018 & $\checkmark$ & Base & 0.0001096 & 0.00274 & 0.002743 & 10.03 & 0.5617 & 10.59 & 10.05 \\
44402/2018 & $\times$ & High & 0.002506 & 0.002506 & 0.002634 & -1.55 & 0.2993 & -1.251 & -1.383 \\
44402/2018 & $\checkmark$ & High & 0.0001297 & 0.003242 & 0.003332 & -1.41 & 0.3115 & -1.099 & -1.253 \\
21415/2018 & $\times$ & Base & 0.001018 & 0.001018 & 0.001021 & 1.421 & 0.6215 & 2.042 & 1.753 \\
21415/2018 & $\checkmark$ & Base & \ensuremath{9.305\times10^{-5}} & 0.002326 & 0.002329 & 7.961 & 0.5846 & 8.546 & 8.018 \\
21415/2018 & $\times$ & High & 0.001274 & 0.001274 & 0.001435 & -1.79 & 0.2931 & -1.497 & -1.61 \\
21415/2018 & $\checkmark$ & High & 0.0001031 & 0.002576 & 0.002735 & -1.53 & 0.286 & -1.244 & -1.418 \\
21418/2018 & $\times$ & Base & 0.001534 & 0.001534 & 0.001544 & 4.031 & 0.6301 & 4.661 & 4.16 \\
21418/2018 & $\checkmark$ & Base & 0.0001236 & 0.00309 & 0.0031 & 11.81 & 0.5499 & 12.36 & 11.33 \\
21418/2018 & $\times$ & High & 0.001673 & 0.001673 & 0.001863 & -1.704 & 0.2886 & -1.416 & -1.54 \\
21418/2018 & $\checkmark$ & High & 0.0001342 & 0.003355 & 0.003529 & -1.371 & 0.2885 & -1.082 & -1.292
\\
\bottomrule
\end{tabular}
\end{table}

\begin{table}[H]
\centering
\caption{Gaussian baseline scores on test realizations; lower values are better. Negative log densities are exact and use nats per coordinate. Energy score uses the original dimensionless Gaussian coordinates and 512 samples per baseline, obtained by applying the covariance Cholesky factor to standard normal samples. True Gaussian denotes the known generating density. Figure~\ref{fig:dart_ppca} uses a separate evaluation of DART baselines with three sampling streams.}
\begin{tabular}{llrr}
\toprule
Case & Model & Negative log score & Energy score\\
\midrule
Gaussian & True Gaussian & 0.1255 & 5.717 \\
Gaussian & PPCA & 0.1266 & 5.72 \\
Gaussian & Diagonal Gaussian & 0.9087 & 5.793 \\
\\
\bottomrule
\end{tabular}
\end{table}

Proposition~\ref{prop:equivalence} is an identity in exact arithmetic. Applying its map in single precision to all 60 original fits gives maximum absolute differences of 1 nat in conditional log density and \ensuremath{3.576\times10^{-7}} in conditional means, in model coordinates. The 1 nat difference occurs when mapping a DART model from skip to no skip at base variance, where the absolute log density is large; the comparison uses relative tolerance $3\times10^{-5}$ and absolute tolerance 0.01 nats. Gaussian differences remain below $2\times10^{-4}$ nats. This check uses the first eight validation windows and latent samples from the prior. Marginal IW estimates use encoder samples on the full test set, and their accuracy remains uncertain after this check of conditional density. A separate sampling comparison on the initial test subset finds a largest absolute change of 0.2616 nats per coordinate in a fit's mean IW estimate between 256 and 1,024 samples.

\begingroup
\setlength{\tabcolsep}{3pt}
\small
\begin{longtable}{lllrrrrrrrr}
\caption{All 12 original Gaussian fits. Raw MSE decodes the encoder mean and includes the input skip; MSE divided by $b^2$ expresses the error in observation coordinates. Both are in squared dimensionless coordinates. Negative IW estimates negative log density in nats per coordinate; KL measures encoder divergence in nats per realization. Energy score uses 512 prior samples per fit. Errors, KL, and energy score are averaged over all test realizations for each fit. Lower errors and scores are better. ESS is computed from 256 importance samples per realization and then summarized by its median across realizations. Step is the Adam iteration chosen by validation. Base and high denote observation variances $\sigma_0^2$ and $25\sigma_0^2$. The Skip column uses $\checkmark$ for $\alpha=0.8$ and $\times$ for $\alpha=0$.}\\
\toprule
Case & Skip & Variance & Seed & Step & \shortstack{Raw\\MSE} & MSE/$b^2$ & \shortstack{Negative\\IW} & KL & \shortstack{Energy\\score} & ESS\\
\midrule
\endfirsthead
\toprule
Case & Skip & Variance & Seed & Step & \shortstack{Raw\\MSE} & MSE/$b^2$ & \shortstack{Negative\\IW} & KL & \shortstack{Energy\\score} & ESS\\
\midrule
\endhead
Gaussian & $\times$ & Base & 42 & 5000 & 0.06162 & 0.06162 & 0.1335 & 17.94 & 5.723 & 41.3 \\
Gaussian & $\times$ & Base & 1729 & 5250 & 0.0617 & 0.0617 & 0.1334 & 18.06 & 5.724 & 35.68 \\
Gaussian & $\times$ & Base & 31415 & 7750 & 0.06149 & 0.06149 & 0.1326 & 17.79 & 5.721 & 52.9 \\
Gaussian & $\checkmark$ & Base & 42 & 10000 & 0.002494 & 0.06236 & 0.1371 & 17.9 & 5.719 & 31.33 \\
Gaussian & $\checkmark$ & Base & 1729 & 6000 & 0.002497 & 0.06243 & 0.1376 & 17.96 & 5.722 & 29.33 \\
Gaussian & $\checkmark$ & Base & 31415 & 7500 & 0.002496 & 0.06239 & 0.1371 & 17.96 & 5.724 & 29.24 \\
Gaussian & $\times$ & High & 42 & 750 & 0.3608 & 0.3608 & 1.252 & 0.0003156 & 6.901 & 255.8 \\
Gaussian & $\times$ & High & 1729 & 5500 & 0.1306 & 0.1306 & 1.205 & 3.881 & 7.088 & 209.9 \\
Gaussian & $\times$ & High & 31415 & 6250 & 0.1042 & 0.1042 & 1.202 & 4.658 & 7.118 & 206.5 \\
Gaussian & $\checkmark$ & High & 42 & 10000 & 0.003662 & 0.09155 & 1.201 & 4.814 & 7.13 & 202.3 \\
Gaussian & $\checkmark$ & High & 1729 & 5750 & 0.003704 & 0.09261 & 1.201 & 4.78 & 7.123 & 180.1 \\
Gaussian & $\checkmark$ & High & 31415 & 8750 & 0.003684 & 0.0921 & 1.201 & 4.787 & 7.118 & 196.3 \\
\\
\bottomrule
\end{longtable}
\endgroup

\subsection{Importance sampling estimates of log density}
\begin{table}[H]
\centering
\caption{Differences in negative IW log density estimates at $K=256$, in nats per coordinate, for the original records. Fits are paired by training seed. Each cell gives the median difference across three seeds and the full range below it; negative values favor the first condition. DART labels identify station and year. The ranking by exact marginal likelihood remains uncertain with these finite sample estimates. The first two comparisons keep variance the same; the last changes both skip and variance.}
\label{tab:results}
\begin{tabular}{lccc}
\toprule
Case & \shortstack{Skip vs no skip\\Base variance} & \shortstack{Skip vs no skip\\High variance} & \shortstack{Skip, high variance\\vs no skip, base variance}\\
\midrule
Gaussian & \shortstack{$0.004143$\\$[0.003679,\,0.004428]$} & \shortstack{$-0.004057$\\$[-0.05053,\,-0.0009969]$} & \shortstack{$1.068$\\$[1.068,\,1.069]$} \\
23461/2022 & \shortstack{$9.496$\\$[3.226,\,11.24]$} & \shortstack{$-0.02887$\\$[-0.2702,\,0.3098]$} & \shortstack{$-4.328$\\$[-4.52,\,-1.029]$} \\
44402/2016 & \shortstack{$-0.1457$\\$[-0.7499,\,0.8596]$} & \shortstack{$-0.002817$\\$[-0.007894,\,0.00181]$} & \shortstack{$-0.3495$\\$[-0.4308,\,-0.1869]$} \\
46407/2011 & \shortstack{$2.374$\\$[2.168,\,4.924]$} & \shortstack{$-0.01531$\\$[-0.02908,\,0.0532]$} & \shortstack{$-26.65$\\$[-30.07,\,-26.16]$} \\
52403/2020 & \shortstack{$4.26$\\$[3.937,\,5.114]$} & \shortstack{$0.1918$\\$[0.05503,\,0.2004]$} & \shortstack{$-2.595$\\$[-3.644,\,-2.527]$}
\\
\bottomrule
\end{tabular}
\end{table}

A sensitivity check follows the low initial DART effective sample sizes by evaluating the same network weights with 4,096 and 16,384 samples in two additional sampling streams. Each sample count uses the first 32 test windows, or all windows when fewer are available: 32 Gaussian, 17 from 23461, 32 from 44402, 27 from 46407, and 16 from 52403. Table~\ref{tab:refinement} compares these windows with their original estimates using 256 samples. For Gaussian data and 44402, this subset is smaller than the full test set used in the main results.

Even with more samples, the accuracy of the marginal log density estimate remains unknown. Between 256 and 16,384 samples, the largest absolute change in a fit's mean estimate was 0.468 nats per coordinate on DART and 0.000868 on Gaussian data. At 16,384 samples, median DART ESS per fit ranged from 1.0001 to 23.4. At base variance, the skip model's estimated negative IW log density exceeds that of the model without a skip for 23461, 46407, and 52403 at every sample count and encoder distribution tested. The direction for 44402 varies by training seed. At high variance, differences between skip and no skip take both signs for 23461, 44402, and 46407; on 52403, the skip model's estimate exceeds that of the model without a skip at every seed and sample count tested.

\begin{table}[H]
\centering
\setlength{\tabcolsep}{2pt}
\caption{Importance estimates with more samples on the same initial test windows. DART labels give station and year. Differences compare fits that share a training seed, at the observation variances shown. Each difference subtracts the negative IW log density without a skip from the value with a skip. Cells give the median difference in nats per coordinate and the full range across three seeds underneath. Negative values indicate a lower estimated score for the skip model. The final row for each case changes both skip use and observation variance. Base and high denote $\sigma_0^2$ and $25\sigma_0^2$. The skip coefficient is 0.8 when present and zero otherwise. The Gaussian and 44402 subsets are smaller than their full test sets.}
\label{tab:refinement}
\small
\renewcommand{\arraystretch}{1.8}
\begin{tabular}{lccccc}
\toprule
Case & \multicolumn{2}{c}{\shortstack{Observation\\variance}} & $K=256$ & $K=4096$ & $K=16384$\\
\cmidrule(lr){2-3}
& Skip & No skip & & &\\
\midrule
Gaussian & Base & Base & \shortstack{$0.003605$\\$[0.001233,\,0.004018]$} & \shortstack{$0.003406$\\$[0.001593,\,0.00391]$} & \shortstack{$0.003707$\\$[0.001884,\,0.00389]$} \\
Gaussian & High & High & \shortstack{$-0.00525$\\$[-0.05736,\,\ensuremath{9.69\times10^{-5}}]$} & \shortstack{$-0.005248$\\$[-0.05733,\,\ensuremath{9.666\times10^{-5}}]$} & \shortstack{$-0.005238$\\$[-0.05736,\,0.000105]$} \\
Gaussian & High & Base & \shortstack{$1.068$\\$[1.068,\,1.069]$} & \shortstack{$1.069$\\$[1.069,\,1.069]$} & \shortstack{$1.069$\\$[1.069,\,1.069]$} \\
23461/2022 & Base & Base & \shortstack{$9.496$\\$[3.226,\,11.24]$} & \shortstack{$9.462$\\$[3.209,\,11.11]$} & \shortstack{$9.455$\\$[3.185,\,11.05]$} \\
23461/2022 & High & High & \shortstack{$-0.02887$\\$[-0.2702,\,0.3098]$} & \shortstack{$-0.03072$\\$[-0.3078,\,0.302]$} & \shortstack{$-0.02819$\\$[-0.3055,\,0.3093]$} \\
23461/2022 & High & Base & \shortstack{$-4.328$\\$[-4.52,\,-1.029]$} & \shortstack{$-4.286$\\$[-4.527,\,-1.026]$} & \shortstack{$-4.262$\\$[-4.502,\,-1.011]$} \\
44402/2016 & Base & Base & \shortstack{$-0.1264$\\$[-0.7203,\,0.8996]$} & \shortstack{$-0.1789$\\$[-0.7328,\,0.8346]$} & \shortstack{$-0.1953$\\$[-0.7254,\,0.807]$} \\
44402/2016 & High & High & \shortstack{$-0.00224$\\$[-0.009321,\,0.002505]$} & \shortstack{$-0.01095$\\$[-0.01247,\,0.002069]$} & \shortstack{$-0.01189$\\$[-0.01382,\,0.0006799]$} \\
44402/2016 & High & Base & \shortstack{$-0.2881$\\$[-0.3638,\,-0.1447]$} & \shortstack{$-0.2255$\\$[-0.3328,\,-0.1409]$} & \shortstack{$-0.1959$\\$[-0.3169,\,-0.1381]$} \\
46407/2011 & Base & Base & \shortstack{$2.374$\\$[2.168,\,4.924]$} & \shortstack{$2.281$\\$[2.09,\,4.798]$} & \shortstack{$2.275$\\$[2.011,\,4.8]$} \\
46407/2011 & High & High & \shortstack{$-0.01531$\\$[-0.02908,\,0.0532]$} & \shortstack{$-0.02418$\\$[-0.03144,\,0.04631]$} & \shortstack{$-0.03034$\\$[-0.03092,\,0.04173]$} \\
46407/2011 & High & Base & \shortstack{$-26.65$\\$[-30.07,\,-26.16]$} & \shortstack{$-26.54$\\$[-29.86,\,-26.04]$} & \shortstack{$-26.48$\\$[-29.81,\,-25.97]$} \\
52403/2020 & Base & Base & \shortstack{$4.26$\\$[3.937,\,5.114]$} & \shortstack{$4.112$\\$[3.735,\,4.988]$} & \shortstack{$4.139$\\$[3.595,\,4.974]$} \\
52403/2020 & High & High & \shortstack{$0.1918$\\$[0.05503,\,0.2004]$} & \shortstack{$0.168$\\$[0.031,\,0.1719]$} & \shortstack{$0.1566$\\$[0.0255,\,0.1715]$} \\
52403/2020 & High & Base & \shortstack{$-2.595$\\$[-3.644,\,-2.527]$} & \shortstack{$-2.555$\\$[-3.565,\,-2.497]$} & \shortstack{$-2.521$\\$[-3.523,\,-2.462]$}
\\
\bottomrule
\end{tabular}
\end{table}

\section{Window means, shape, and energy scores}
For each record and its window weights, the variance across all window entries equals the mean variance within windows plus the variance of window means. The mean variance within windows accounts for 4.80 to 16.69\% of training variance in the original records, 99.14 to 99.68\% of their test variance, and 95.99 to 99.32\% of training variance in the additional records. These ranges use the weights for each record and partition. Energy score combines level and shape differences inside a Euclidean norm. Subtracting window means changes this distance and allows a separate assessment of window shape. At 44402, the 2016 height step occurs near the catalogue deployment date of 23 April, and the 2018 record falls within one catalogue deployment period. Identifying the cause of the height step requires evidence beyond those dates.

Let $P x=x-\bar x\mathbf 1$ remove each window's own mean. Apply this transformation separately to observations and model samples. The three energy scores use full windows $x$, windows after mean subtraction $Px$, and the following vector of time increments:
\begin{equation}
\Phi(x)=\left(\frac{x_{t+\ell}-x_t}{\sqrt{192-\ell}}\right)_{\ell\in\mathcal L,\;0\leq t<192-\ell},
\qquad \mathcal L=\{1,4,16,48,96\}.
\label{eq:increments}
\end{equation}
The vector of time increments has 795 coordinates. Since it includes adjacent differences, $\ker\Phi=\operatorname{span}\{\mathbf 1\}$ and $\operatorname{rank}\Phi=191$. It determines window shape up to an additive constant, including dependence across positions. Extra lags change the distance: writing $\Phi(x)=Ax$ gives $\sqrt{(x-y)^\top A^\top A(x-y)}$, in place of Euclidean distance between $Px$ and $Py$. Under finite first moments, the energy scores of $Px$ and $\Phi(x)$ are proper for their respective distributions. Both operations remove window means. Their scores assess the resulting shape distributions; the score on full windows also assesses mean levels and their dependence on shape. The two shape scores use different distances.

All DART scores use the dimensionless coordinates defined by the training mean and standard deviation. Multiplying by a record's positive training scale converts its energy scores to meters. Each of three independent prior streams samples 1,024 latent values and standard normal noise vectors; each condition multiplies the noise by its observation standard deviation. Within a record, the same latent values and noise vectors are used across conditions and before and after variance estimation. IW estimates use three separate streams of encoder samples. Scores are averaged over the three streams within each fit. Differences pair conditions with the same training seed and sampling stream. Ranges across training seeds describe variation among models, while ranges across sampling streams within a fit describe Monte Carlo variation.

\begin{table}[t]
\centering
\setlength{\tabcolsep}{2.4pt}
\small
\caption{DART energy score differences between skip and no skip at the same observation variance, in dimensionless coordinates. Scores use full windows, windows after subtraction of their own means, and the time increments in Eq.~\ref{eq:increments}. Negative values favor the skip. Each difference is the skip score minus the score without a skip, at base variance $\sigma_0^2$ or high variance $25\sigma_0^2$. The median and full range describe three differences between fits with the same training seed, after averaging streams within each fit. Maximum range width is the largest, over training seeds, of the maximum minus minimum difference across three common sampling streams. Original records use network weights chosen by validation and equal window weights. The 2018 records use final network weights and equal total window weight in each UTC start hour.}\label{tab:dart_contrasts}
\begin{tabular}{lllrrr}
\toprule
Record & Variance & Statistic & \shortstack{Full\\windows} & \shortstack{After mean\\subtraction} & \shortstack{Time\\increments}\\
\midrule

23461/2022 & Base & Median and range & \shortstack{0.369\\{[-1.21, 1.5]}} & \shortstack{0.0255\\{[-0.0553, 0.297]}} & \shortstack{0.00303\\{[-0.00289, 0.0262]}} \\
23461/2022 & Base & Maximum range width & 0.1585 & 0.03316 & 0.004819 \\
23461/2022 & High & Median and range & \shortstack{0.83\\{[0.773, 3.4]}} & \shortstack{0.0822\\{[-0.0275, 0.0868]}} & \shortstack{0.0111\\{[0.000783, 0.0154]}} \\
23461/2022 & High & Maximum range width & 0.4628 & 0.03795 & 0.005022 \\
44402/2016 & Base & Median and range & \shortstack{-0.153\\{[-1.27, 0.849]}} & \shortstack{-0.0814\\{[-0.144, 0.032]}} & \shortstack{-0.00857\\{[-0.0311, 0.0141]}} \\
44402/2016 & Base & Maximum range width & 0.2093 & 0.02672 & 0.004928 \\
44402/2016 & High & Median and range & \shortstack{0.115\\{[0.111, 0.172]}} & \shortstack{0.0313\\{[0.0256, 0.0371]}} & \shortstack{0.00319\\{[0.000627, 0.00757]}} \\
44402/2016 & High & Maximum range width & 0.4656 & 0.05392 & 0.006256 \\
46407/2011 & Base & Median and range & \shortstack{0.903\\{[-0.887, 0.936]}} & \shortstack{-0.0851\\{[-0.0892, -0.0797]}} & \shortstack{-0.00199\\{[-0.00548, -0.000733]}} \\
46407/2011 & Base & Maximum range width & 0.1399 & 0.04674 & 0.006484 \\
46407/2011 & High & Median and range & \shortstack{0.103\\{[-0.327, 0.111]}} & \shortstack{-0.0148\\{[-0.0668, 0.0736]}} & \shortstack{-0.000348\\{[-0.00275, 0.0147]}} \\
46407/2011 & High & Maximum range width & 0.2988 & 0.0161 & 0.002748 \\
52403/2020 & Base & Median and range & \shortstack{-0.407\\{[-0.836, 0.753]}} & \shortstack{-0.0297\\{[-0.0601, 0.0404]}} & \shortstack{-0.00367\\{[-0.00459, 0.0071]}} \\
52403/2020 & Base & Maximum range width & 0.1334 & 0.03162 & 0.004903 \\
52403/2020 & High & Median and range & \shortstack{1.18\\{[-0.604, 3.18]}} & \shortstack{-0.0156\\{[-0.0214, -0.0122]}} & \shortstack{-0.00419\\{[-0.00425, -0.00196]}} \\
52403/2020 & High & Maximum range width & 0.3057 & 0.05334 & 0.008483 \\
44402/2018 & Base & Median and range & \shortstack{-0.337\\{[-0.768, -0.119]}} & \shortstack{-0.345\\{[-0.479, -0.296]}} & \shortstack{-0.035\\{[-0.0523, -0.0334]}} \\
44402/2018 & Base & Maximum range width & 0.05621 & 0.04827 & 0.007899 \\
44402/2018 & High & Median and range & \shortstack{-0.222\\{[-0.322, -0.219]}} & \shortstack{-0.228\\{[-0.363, -0.151]}} & \shortstack{-0.0268\\{[-0.042, -0.0156]}} \\
44402/2018 & High & Maximum range width & 0.06129 & 0.04827 & 0.006903 \\
21415/2018 & Base & Median and range & \shortstack{-0.3\\{[-0.346, -0.227]}} & \shortstack{-0.372\\{[-0.393, -0.327]}} & \shortstack{-0.0564\\{[-0.058, -0.0467]}} \\
21415/2018 & Base & Maximum range width & 0.02322 & 0.02964 & 0.004697 \\
21415/2018 & High & Median and range & \shortstack{-0.253\\{[-0.341, -0.236]}} & \shortstack{-0.252\\{[-0.32, -0.223]}} & \shortstack{-0.0375\\{[-0.051, -0.0349]}} \\
21415/2018 & High & Maximum range width & 0.06139 & 0.0564 & 0.009434 \\
21418/2018 & Base & Median and range & \shortstack{-0.613\\{[-0.852, -0.266]}} & \shortstack{-0.453\\{[-0.626, -0.411]}} & \shortstack{-0.0577\\{[-0.0901, -0.0548]}} \\
21418/2018 & Base & Maximum range width & 0.05868 & 0.02613 & 0.003922 \\
21418/2018 & High & Median and range & \shortstack{-0.476\\{[-0.559, -0.433]}} & \shortstack{-0.501\\{[-0.544, -0.357]}} & \shortstack{-0.0685\\{[-0.074, -0.0524]}} \\
21418/2018 & High & Maximum range width & 0.03753 & 0.05253 & 0.008115
\\
\bottomrule
\end{tabular}
\end{table}

\subsection{Baseline distributions}
\label{sec:reference_distributions}
Baseline distributions use the mean and covariance of training windows, with equal window weights for the original records and equal total weight in each UTC start hour for 2018. Write $v=\sigma_0^2$. PPCA at the neural observation variance has covariance $WW^\top+vI$ or $WW^\top+25vI$, with the loadings fitted separately at each variance. The model uses at most 16 loading directions along the leading covariance eigenvectors, with magnitudes $\sqrt{\max(\lambda_j-v,0)}$ at variance $v$. The diagonal Gaussian sets each coordinate variance to the larger of its empirical variance and the noise variance. A separate PPCA fit estimates noise by averaging the remaining covariance eigenvalues, with a minimum of $10^{-4}$. This gives $10^{-4}$ for the original records and 21415/2018, approximately $1.10338\times10^{-4}$ for 21418/2018, and $1.20477\times10^{-4}$ for 44402/2018. If this estimate equals the neural observation variance, the two PPCA procedures give the same distribution.

Each baseline distribution supplies three random sampling streams of 1,024 samples. Differences across streams measure sampling variation for that one distribution. The baseline distribution is unchanged when the neural observation variance is replaced. Figure~\ref{fig:dart_ppca} compares each neural fit with PPCA at the same observation variance.

At the same observation variance, PPCA has lower mean energy scores than every one of the 84 neural fits on all three scores. The advantage remains after removing window means and when scoring time increments, so it also concerns variation within windows. PPCA obtains its parameters by covariance eigendecomposition \citep{TippingBishop1999ProbabilisticPCA}; the neural models use variational inference and a finite number of optimizer steps. These comparisons change both the model class and fitting procedure. Energy scores use unconditional samples generated from the latent prior and decoder, together with test observations; encoder importance weights affect only the log density estimates.

The training covariance helps explain why a linear baseline is useful here. Let $S=\sum_j\lambda_j u_j u_j^\top$ be covariance about the mean window, using the training weights, with eigenvalues in decreasing order. For each scoring transform $T\in\{I,P,\Phi\}$, the fraction $\sum_{j=1}^{16}\lambda_j\|Tu_j\|^2/\operatorname{tr}(TST^\top)$ exceeds 99.97\% on every record. Thus, the leading 16 directions account for almost all empirical covariance trace, including after mean subtraction and for time increments. Gaussianity, dependence beyond covariance, and convergence of neural training remain undetermined by this calculation.

\begin{figure}[H]
\centering
\includegraphics[width=\linewidth,height=0.81\textheight,keepaspectratio]{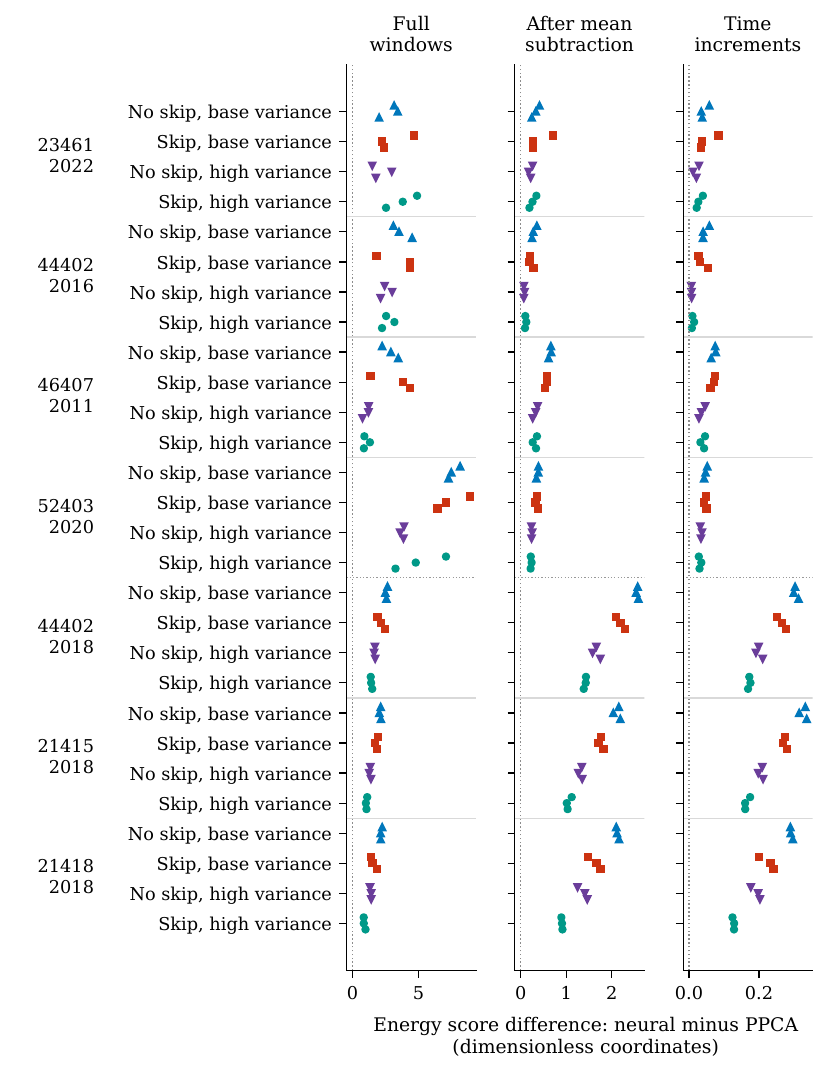}
\caption{Energy score of each VAE fit minus that of PPCA at the same observation variance. Positive values favor PPCA. Each row shows the three training seeds. Scores average three sampling streams within each fit. PPCA uses separate streams, and one PPCA fit supplies the baseline at each variance on each record. The three subplots use full windows, windows after subtraction of their own means, and the time increments in Eq.~\ref{eq:increments}, all in dimensionless coordinates. Base variance is $\sigma_0^2$ and high variance is $25\sigma_0^2$. Original records use network weights chosen by validation and equal window weights; 2018 records use final network weights and UTC start hour weights.}
\label{fig:dart_ppca}
\end{figure}

\subsection{Observation variance estimation}
\label{sec:variance_reset}
For the same encoder $q_\phi$ and decoder for observation means $g_\theta$, the Gaussian term that depends on variance, per coordinate, is $\tfrac12\log(2\pi v)+a/(2v)$, where $a=\E_{\mathrm{train}}\E_q\|x-g(Z)\|^2/d$. For $a>0$, this term is minimized at $v=a$. Three independent streams, each with 32 encoder samples per training window, give three estimates of $a$; their average gives the variance estimate. The largest range among those three estimates is 1.18\% of their average. This range measures Monte Carlo variation with the training windows and network weights kept the same.

The variance optimum minimizes reconstruction loss averaged over the encoder distribution on training windows. Energy score instead evaluates samples generated from the prior and decoder against test windows, using a different latent distribution, data partition, and loss. Negative ELBO is the sum of negative marginal log density and KL divergence from the encoder to the exact posterior. Changing observation variance also changes that posterior. A reduction in KL divergence from the encoder to the posterior can lower negative ELBO even when negative marginal log density increases.

The new observation variance is used both to generate noise and to evaluate the density, including its normalizer. The corresponding residual variance is $b^2\hat v$, with the Jacobian in Eq.~\ref{eq:density}. Encoder and decoder weights and outputs stay the same, and importance sampling continues to use the same encoder distribution. Across three common prior streams within a fit, the largest ranges of energy score changes are 0.00322 for full windows, 0.00280 after mean subtraction, and 0.000727 for time increments. Figure~\ref{fig:dart_variance_refit} distinguishes these sampling ranges from variation across training seeds, and Table~\ref{tab:dart_variance} gives the median and full range across the three seeds. These comparisons leave the effect on exact marginal likelihood and predictive calibration unresolved.

The two distance terms use the same samples and test weights as the full energy score. For transform $T$, mean score is $D_T-S_T$, where $D_T=\sum_i w_i M^{-1}\sum_m\|Ty_m-Tx_i\|$ and $S_T=[M(M-1)]^{-1}\sum_{m<n}\|Ty_m-Ty_n\|$. Table~\ref{tab:dart_energy_terms} compares their changes after averaging three streams within each fit. For every transform, the absolute change in $S_T$ is larger than that in $D_T$, giving the score change the opposite sign to the change in $S_T$. When both terms decrease and the decrease in $S_T$ exceeds that in $D_T$, samples are closer to observations on average but the score increases. The score rewards proximity through lower $D_T$ and spread through larger $S_T$; together these terms form a strictly proper score for each transformed distribution with a finite first moment \citep{GneitingRaftery2007ProperScoringRules}. Keeping the neural weights the same isolates the effect of observation variance on these scores.

\begin{table}[H]
\centering
\caption{Changes in the two distance terms of the energy score when observation variance is replaced, calculated as after minus before. $D_T$ averages distances from model samples to observations. $S_T$ is half the mean distance between distinct model samples and is subtracted from the first term. The displayed relation holds for every fit after averaging its three common sampling streams. Each variance group contains the same 42 fits for all three evaluations: full windows, windows after subtraction of their own means, and time increments.}
\label{tab:dart_energy_terms}
\begin{tabular}{lcll}
\toprule
Original variance & Fits & Changes in distance terms & Score after replacement\\
\midrule
$\sigma_0^2$ & 42 & $0<\Delta D_T<\Delta S_T$ & Lower\\
$25\sigma_0^2$ & 42 & $\Delta S_T<\Delta D_T<0$ & Higher\\
\bottomrule
\end{tabular}
\end{table}

\begin{figure}[H]
\centering
\includegraphics[width=\linewidth,height=0.81\textheight,keepaspectratio]{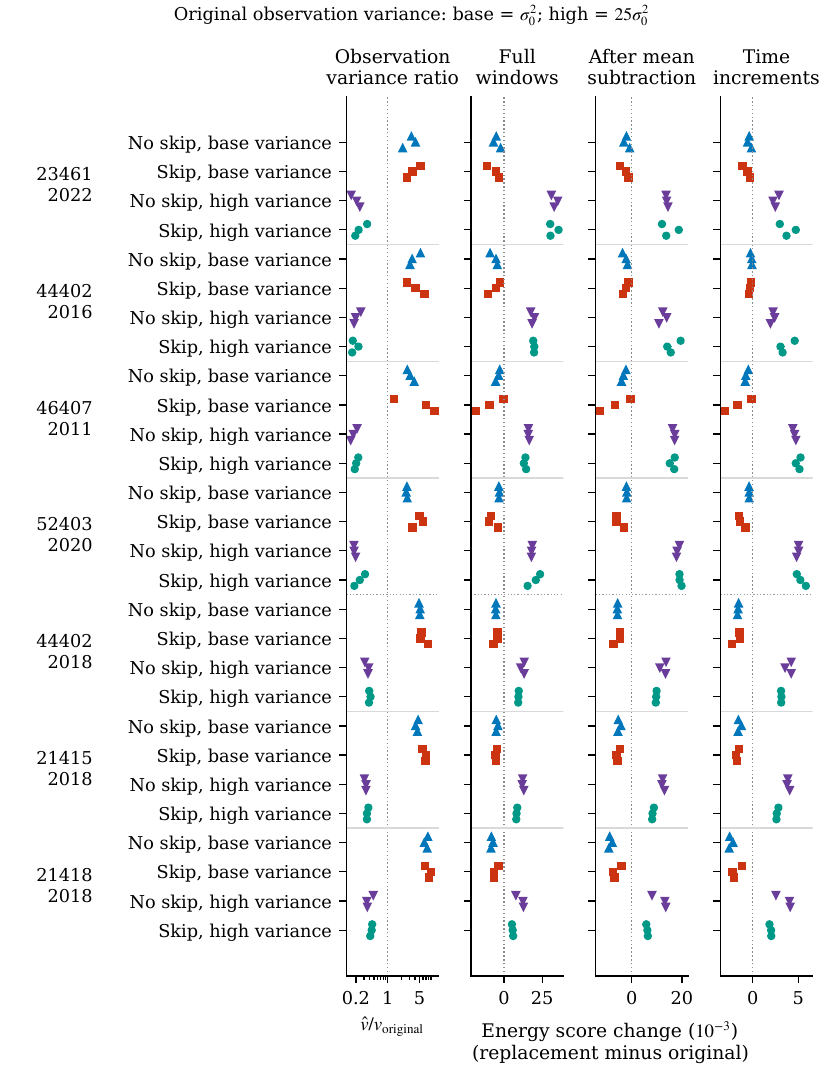}
\caption{Replacing observation variance in all 84 DART fits. For each record and condition, points run from top to bottom in the order of training seeds 42, 1729, and 31415. The variance ratio is $\hat v/v_{\mathrm{original}}$; $\hat v$ is estimated from three streams of encoder samples, and a ratio of one denotes equal variances. The variance ratio axis is logarithmic. Energy score columns show $10^3$ times the difference between the score after variance replacement and the score before replacement in dimensionless coordinates, averaged over three common prior streams. Negative values mean the score falls after variance replacement. Each thin bar runs from the smallest to the largest score change across those three streams for one fit. Original records use network weights chosen by validation and equal window weights; 2018 records use final network weights and UTC start hour weights.}
\label{fig:dart_variance_refit}
\end{figure}

\clearpage
\begingroup
\setlength{\tabcolsep}{3pt}
\small
\begin{longtable}{lllrrrr}
\caption{Observation variance estimated from training windows with encoder and decoder weights kept the same. The variance column gives $10^4\hat v$, where $\hat v$ is in squared dimensionless coordinates. Energy score columns give $10^3$ times the difference between the score after variance replacement and the score before replacement in dimensionless coordinates; negative values mean lower scores after variance replacement. Entries give the median across three training seeds, with the full range below, after averaging sampling streams within each fit. The original observation variance is $10^{-4}$ for base variance models and $0.0025$ for high variance models. Each model receives its own variance estimate, so conditions can have different values. The Skip column uses $\checkmark$ for $\alpha=0.8$ and $\times$ for $\alpha=0$.}\label{tab:dart_variance}\\
\toprule
Record & Skip & \shortstack{Original\\variance} & $10^4\hat v$ & $10^3\Delta$ \shortstack{Full\\windows} & $10^3\Delta$ \shortstack{After mean\\subtraction} & $10^3\Delta$ \shortstack{Time\\increments}\\
\midrule
\endfirsthead
\toprule
Record & Skip & \shortstack{Original\\variance} & $10^4\hat v$ & $10^3\Delta$ \shortstack{Full\\windows} & $10^3\Delta$ \shortstack{After mean\\subtraction} & $10^3\Delta$ \shortstack{Time\\increments}\\
\midrule
\endhead
23461/2022 & $\times$ & Base & \shortstack{3.32\\{[2.14, 4.09]}} & \shortstack{-5.13\\{[-6.91, -2.27]}} & \shortstack{-2.11\\{[-3.1, -0.803]}} & \shortstack{-0.395\\{[-0.559, -0.141]}} \\
23461/2022 & $\checkmark$ & Base & \shortstack{3.53\\{[2.63, 5.27]}} & \shortstack{-5.15\\{[-11, -3.29]}} & \shortstack{-2.25\\{[-4.58, -1.3]}} & \shortstack{-0.566\\{[-1.11, -0.32]}} \\
23461/2022 & $\times$ & High & \shortstack{5.18\\{[3.94, 6.08]}} & \shortstack{32.9\\{[31, 35.2]}} & \shortstack{13.9\\{[13.7, 14.4]}} & \shortstack{2.48\\{[2.22, 2.88]}} \\
23461/2022 & $\checkmark$ & High & \shortstack{5.74\\{[4.84, 8.83]}} & \shortstack{30.4\\{[30.2, 35.7]}} & \shortstack{13.7\\{[12.1, 18.8]}} & \shortstack{3.7\\{[2.97, 4.72]}} \\
44402/2016 & $\times$ & Base & \shortstack{3.45\\{[3.1, 5.24]}} & \shortstack{-5.3\\{[-9.19, -4.46]}} & \shortstack{-2.22\\{[-3.63, -1.67]}} & \shortstack{-0.105\\{[-0.246, -0.0806]}} \\
44402/2016 & $\checkmark$ & Base & \shortstack{4.1\\{[2.66, 6.45]}} & \shortstack{-5.23\\{[-10.7, -2.68]}} & \shortstack{-2.39\\{[-3.57, -1.3]}} & \shortstack{-0.281\\{[-0.395, -0.172]}} \\
44402/2016 & $\times$ & High & \shortstack{4.93\\{[4.55, 6.41]}} & \shortstack{18.4\\{[17.5, 19.9]}} & \shortstack{12.4\\{[10.9, 13.9]}} & \shortstack{2.23\\{[1.94, 2.4]}} \\
44402/2016 & $\checkmark$ & High & \shortstack{4.27\\{[4.14, 5.68]}} & \shortstack{19.7\\{[19.1, 19.8]}} & \shortstack{15.6\\{[14.1, 19.5]}} & \shortstack{3.28\\{[3.04, 4.6]}} \\
46407/2011 & $\times$ & Base & \shortstack{3.12\\{[2.7, 3.83]}} & \shortstack{-3.54\\{[-5.57, -2.85]}} & \shortstack{-3.3\\{[-4.05, -2.28]}} & \shortstack{-0.76\\{[-0.828, -0.486]}} \\
46407/2011 & $\checkmark$ & Base & \shortstack{6.91\\{[1.36, 10.8]}} & \shortstack{-9.52\\{[-18.3, -0.433]}} & \shortstack{-6.68\\{[-12.6, -0.418]}} & \shortstack{-1.68\\{[-3.08, -0.119]}} \\
46407/2011 & $\times$ & High & \shortstack{4.39\\{[3.84, 5.29]}} & \shortstack{15.9\\{[15.8, 16.5]}} & \shortstack{17.1\\{[16.3, 17.1]}} & \shortstack{4.59\\{[4.39, 4.73]}} \\
46407/2011 & $\checkmark$ & High & \shortstack{5.07\\{[4.72, 5.65]}} & \shortstack{14\\{[12.9, 14.5]}} & \shortstack{16.9\\{[15.2, 17.1]}} & \shortstack{5.13\\{[4.72, 5.24]}} \\
52403/2020 & $\times$ & Base & \shortstack{2.65\\{[2.56, 2.7]}} & \shortstack{-3.36\\{[-3.43, -3.07]}} & \shortstack{-2.01\\{[-2.08, -1.94]}} & \shortstack{-0.388\\{[-0.408, -0.386]}} \\
52403/2020 & $\checkmark$ & Base & \shortstack{4.99\\{[3.54, 6.07]}} & \shortstack{-8.48\\{[-9.78, -4.19]}} & \shortstack{-6.09\\{[-6.1, -3.05]}} & \shortstack{-1.37\\{[-1.49, -0.785]}} \\
52403/2020 & $\times$ & High & \shortstack{4.55\\{[4.52, 4.92]}} & \shortstack{18\\{[17.9, 18.5]}} & \shortstack{18.1\\{[17.8, 19]}} & \shortstack{5.03\\{[4.81, 5.05]}} \\
52403/2020 & $\checkmark$ & High & \shortstack{6.09\\{[4.62, 7.94]}} & \shortstack{20.8\\{[15.4, 23.6]}} & \shortstack{19.1\\{[19, 19.9]}} & \shortstack{5.23\\{[4.84, 5.83]}} \\
44402/2018 & $\times$ & Base & \shortstack{5.09\\{[4.89, 5.18]}} & \shortstack{-5.38\\{[-5.48, -5.3]}} & \shortstack{-5.64\\{[-5.72, -5.57]}} & \shortstack{-1.62\\{[-1.67, -1.55]}} \\
44402/2018 & $\checkmark$ & Base & \shortstack{5.55\\{[5.25, 7.79]}} & \shortstack{-4.42\\{[-6.96, -4.36]}} & \shortstack{-4.74\\{[-7.25, -4.57]}} & \shortstack{-1.44\\{[-2.25, -1.38]}} \\
44402/2018 & $\times$ & High & \shortstack{9.21\\{[7.83, 9.58]}} & \shortstack{13.1\\{[10.9, 13.2]}} & \shortstack{13.5\\{[11.2, 13.7]}} & \shortstack{4.22\\{[3.54, 4.22]}} \\
44402/2018 & $\checkmark$ & High & \shortstack{9.8\\{[9.71, 10.5]}} & \shortstack{9.4\\{[9.27, 9.62]}} & \shortstack{9.75\\{[9.6, 9.91]}} & \shortstack{3.12\\{[3.11, 3.15]}} \\
21415/2018 & $\times$ & Base & \shortstack{4.52\\{[4.1, 4.66]}} & \shortstack{-5.15\\{[-5.22, -4.06]}} & \shortstack{-5.34\\{[-5.47, -4.26]}} & \shortstack{-1.57\\{[-1.61, -1.25]}} \\
21415/2018 & $\checkmark$ & Base & \shortstack{6.81\\{[5.84, 7.06]}} & \shortstack{-5.39\\{[-5.67, -4.62]}} & \shortstack{-5.62\\{[-6.02, -4.77]}} & \shortstack{-1.76\\{[-1.87, -1.51]}} \\
21415/2018 & $\times$ & High & \shortstack{8.3\\{[7.68, 8.36]}} & \shortstack{12.2\\{[11.9, 12.9]}} & \shortstack{12.3\\{[12, 13]}} & \shortstack{3.85\\{[3.76, 4.05]}} \\
21415/2018 & $\checkmark$ & High & \shortstack{8.76\\{[8.73, 9.4]}} & \shortstack{8.17\\{[8, 8.65]}} & \shortstack{8.29\\{[8.15, 8.84]}} & \shortstack{2.64\\{[2.6, 2.82]}} \\
21418/2018 & $\times$ & Base & \shortstack{7.39\\{[6.39, 7.64]}} & \shortstack{-8.32\\{[-8.61, -7.26]}} & \shortstack{-8.75\\{[-9.1, -7.73]}} & \shortstack{-2.54\\{[-2.58, -2.17]}} \\
21418/2018 & $\checkmark$ & Base & \shortstack{8.19\\{[6.52, 8.87]}} & \shortstack{-6.44\\{[-6.53, -3.73]}} & \shortstack{-6.82\\{[-7.33, -4.11]}} & \shortstack{-2.08\\{[-2.23, -1.21]}} \\
21418/2018 & $\times$ & High & \shortstack{8.93\\{[8.68, 12]}} & \shortstack{12.3\\{[7.71, 12.7]}} & \shortstack{13.2\\{[8.09, 13.6]}} & \shortstack{4.04\\{[2.54, 4.11]}} \\
21418/2018 & $\checkmark$ & High & \shortstack{11.1\\{[10.4, 11.4]}} & \shortstack{5.64\\{[5.16, 5.96]}} & \shortstack{6.23\\{[5.8, 6.43]}} & \shortstack{2.02\\{[1.84, 2.03]}}
\\
\bottomrule
\end{longtable}
\endgroup

\clearpage
\subsection{Network weights, window weights, and variation through time}
An additional evaluation of the 2018 models uses network weights chosen by validation energy score and separate prior and IW streams. These samples differ from those used for final weights even when validation chooses step 10,000. The comparison with equal window weights reuses the primary scores at final weights. Both comparisons were specified before inspection of the new test results.

\begingroup
\setlength{\tabcolsep}{2.5pt}
\small
\begin{longtable}{lllrrrr}
\caption{Alternative network weights and window weights for the three 2018 records. Each entry gives the median energy score difference across three training seeds, with the full range, for conditions paired by seed. Negative values favor the skip. The first evaluation uses the network weights with the lowest validation energy score and gives each UTC start hour equal total window weight. The other evaluation uses final network weights and gives every window equal weight. Steps is the minimum and maximum Adam step count over both conditions and all seeds, or their common final count. Scores are averaged over streams within each fit. Evaluations at validation and final network weights use different samples, even when both use the final training step. Each difference compares skip with no skip at the same original observation variance.}\label{tab:dart_selected}\\
\toprule
Record & Variance & Evaluation & Steps & \shortstack{Full\\windows} & \shortstack{After mean\\subtraction} & \shortstack{Time\\increments}\\
\midrule
\endfirsthead
\toprule
Record & Variance & Evaluation & Steps & \shortstack{Full\\windows} & \shortstack{After mean\\subtraction} & \shortstack{Time\\increments}\\
\midrule
\endhead
44402/2018 & Base & \shortstack{Validation\\weights} & 6000--10000 & \shortstack{-0.359\\{[-0.748, -0.128]}} & \shortstack{-0.385\\{[-0.519, -0.298]}} & \shortstack{-0.0368\\{[-0.055, -0.0353]}} \\
44402/2018 & Base & \shortstack{Equal window\\weights} & 10000 & \shortstack{-0.337\\{[-0.768, -0.119]}} & \shortstack{-0.346\\{[-0.479, -0.296]}} & \shortstack{-0.035\\{[-0.0522, -0.0334]}} \\
44402/2018 & High & \shortstack{Validation\\weights} & 10000--10000 & \shortstack{-0.262\\{[-0.33, -0.241]}} & \shortstack{-0.239\\{[-0.37, -0.201]}} & \shortstack{-0.0288\\{[-0.0431, -0.0213]}} \\
44402/2018 & High & \shortstack{Equal window\\weights} & 10000 & \shortstack{-0.222\\{[-0.322, -0.219]}} & \shortstack{-0.227\\{[-0.363, -0.151]}} & \shortstack{-0.0267\\{[-0.0419, -0.0157]}} \\
21415/2018 & Base & \shortstack{Validation\\weights} & 8000--10000 & \shortstack{-0.328\\{[-0.334, -0.255]}} & \shortstack{-0.406\\{[-0.409, -0.317]}} & \shortstack{-0.06\\{[-0.0612, -0.0448]}} \\
21415/2018 & Base & \shortstack{Equal window\\weights} & 10000 & \shortstack{-0.299\\{[-0.35, -0.211]}} & \shortstack{-0.371\\{[-0.377, -0.331]}} & \shortstack{-0.0555\\{[-0.0562, -0.0473]}} \\
21415/2018 & High & \shortstack{Validation\\weights} & 10000--10000 & \shortstack{-0.267\\{[-0.335, -0.228]}} & \shortstack{-0.266\\{[-0.329, -0.204]}} & \shortstack{-0.0397\\{[-0.0526, -0.0314]}} \\
21415/2018 & High & \shortstack{Equal window\\weights} & 10000 & \shortstack{-0.248\\{[-0.345, -0.225]}} & \shortstack{-0.247\\{[-0.324, -0.212]}} & \shortstack{-0.0364\\{[-0.0519, -0.0328]}} \\
21418/2018 & Base & \shortstack{Validation\\weights} & 9000--10000 & \shortstack{-0.638\\{[-0.82, -0.282]}} & \shortstack{-0.463\\{[-0.633, -0.409]}} & \shortstack{-0.0587\\{[-0.0918, -0.0535]}} \\
21418/2018 & Base & \shortstack{Equal window\\weights} & 10000 & \shortstack{-0.61\\{[-0.856, -0.262]}} & \shortstack{-0.45\\{[-0.631, -0.407]}} & \shortstack{-0.0573\\{[-0.091, -0.0541]}} \\
21418/2018 & High & \shortstack{Validation\\weights} & 10000--10000 & \shortstack{-0.462\\{[-0.577, -0.453]}} & \shortstack{-0.508\\{[-0.553, -0.366]}} & \shortstack{-0.0689\\{[-0.0755, -0.0533]}} \\
21418/2018 & High & \shortstack{Equal window\\weights} & 10000 & \shortstack{-0.473\\{[-0.557, -0.424]}} & \shortstack{-0.499\\{[-0.535, -0.355]}} & \shortstack{-0.0683\\{[-0.0724, -0.0521]}}
\\
\bottomrule
\end{longtable}
\endgroup

Scores are grouped into intervals of 14 days and 28 days from the start of the first eligible training window. Each test window belongs to the interval containing its start time, even if the window extends beyond the interval. An interval's total weight is the sum of its window weights. Its weighted mean score multiplied by that total is its contribution to the overall mean, and the contributions sum to the overall mean. Intervals with few windows remain included. Figures~\ref{fig:dart_intervals_original} and~\ref{fig:dart_intervals_additional} show the differences for each interval and training seed, together with the number and total weight of its test windows. These points describe the available fits and intervals. Population confidence intervals would require a method that accounts for serial dependence and sparse coverage.

Adjacent 2018 windows start four hours apart and share 176 of their 192 observations. Split exclusions keep source observations in separate partitions, while overlap and recurring oscillations create dependence within partitions. The three scores all evaluate these same windows. For full windows, each of 21415 and 21418 has one training seed with positive differences in one interval of 14 days at both observation variances. The full test partitions and every interval of 28 days containing data favor the skip for all three seeds. The positive differences at 21415 occur in an interval with one window, which carries 0.44\% of test weight. At 21418, the corresponding interval contains ten windows and 5.26\% of test weight.

\begin{figure}[H]
\centering
\includegraphics[width=\linewidth,height=0.81\textheight,keepaspectratio]{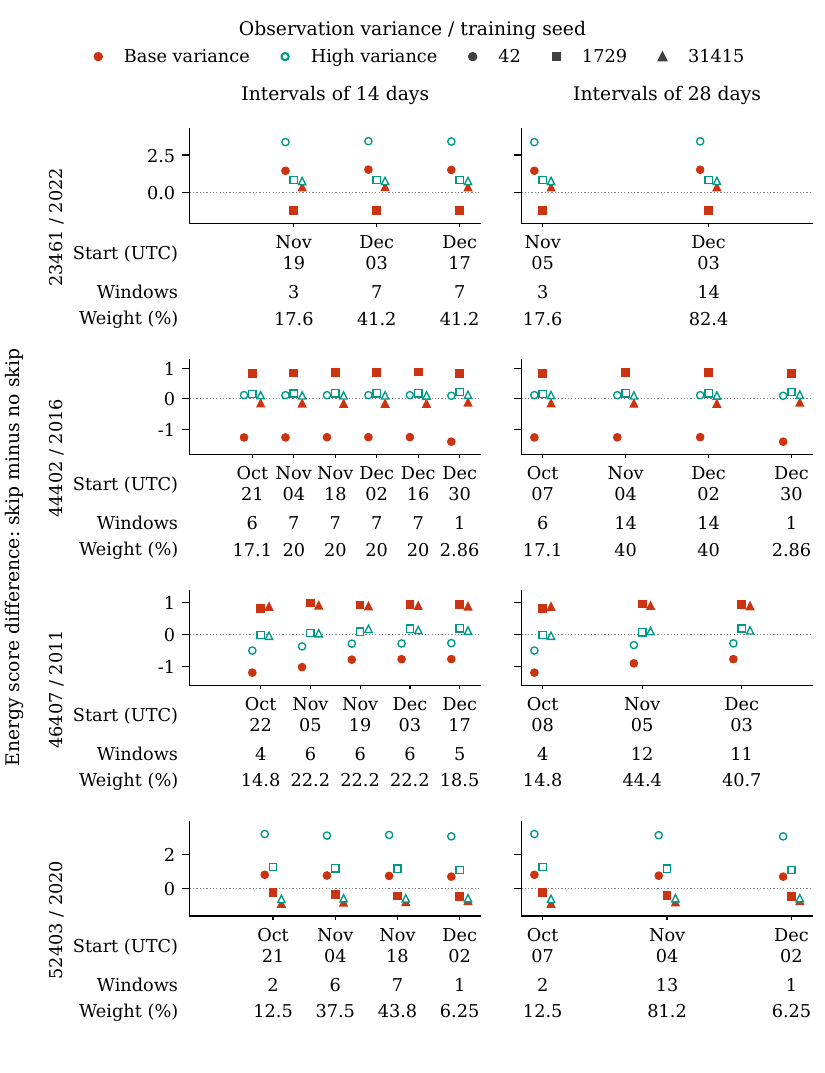}
\caption{Energy score differences over time for the original DART records. Each point gives the skip score minus the score without a skip at the same observation variance for one training seed, after averaging three sampling streams within each fit. Negative values favor the skip. Scores use full windows in dimensionless coordinates. The columns group test windows by start time into intervals of 14 or 28 days. Dates give the interval start in UTC. Horizontal spacing separates the seed markers around each interval start. The coverage below each subplot gives window counts and fractions of total test weight. Marker shapes identify training seeds. Filled points denote base variance $\sigma_0^2$ and open points high variance $25\sigma_0^2$.}
\label{fig:dart_intervals_original}
\end{figure}

\begin{figure}[H]
\centering
\includegraphics[width=\linewidth,height=0.81\textheight,keepaspectratio]{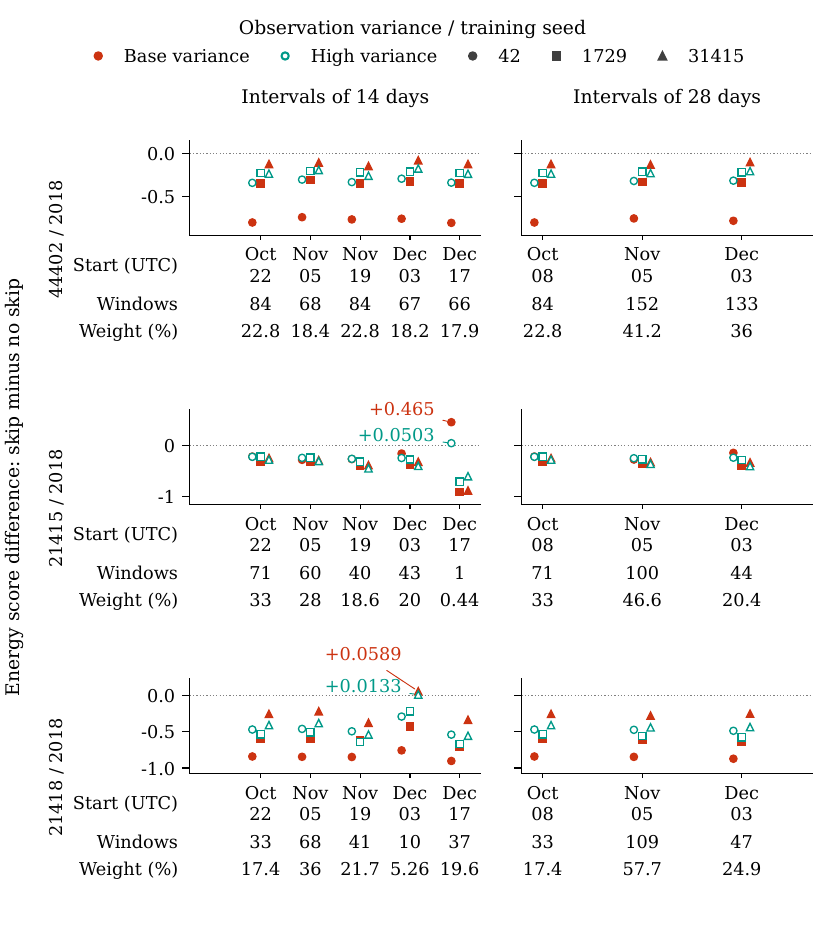}
\caption{Energy score differences over time for the additional DART records, with the definitions in Figure~\ref{fig:dart_intervals_original}. Interval scores use the global UTC start hour weights of their test windows. Counts include overlapping windows. For each of 21415 and 21418, one training seed gives positive differences over 14 days in an interval with little total test weight. At both observation variances, the skip has lower scores for all three seeds in every interval of 28 days containing test windows in these two records.}
\label{fig:dart_intervals_additional}
\end{figure}

\section{Feature ECDFs and spectra}
Figures~\ref{fig:dart_ecdf_historical} to~\ref{fig:dart_spectra_fresh} use the models, prior samples, and original observation variances from the DART energy score evaluation. Training and test windows have weights computed for their own partitions: equal window weights for the original records and equal total weight in each UTC start hour for 2018. Model samples have equal weight. For each neural fit, feature empirical cumulative distribution functions (ECDFs) and power spectra are calculated for three streams of 1,024 samples and then averaged. The curve at each feature value or frequency is the median across three training seeds, and the shading covers the full range. The median curve can take values from different fits.

Here PPCA uses the noise variance estimated from the remaining training covariance eigenvalues, subject to the minimum in Appendix~\ref{sec:reference_distributions}, and averages curves from one fit over three sampling streams.

\begin{figure}[!t]
\centering
\includegraphics[width=\linewidth,height=0.81\textheight,keepaspectratio]{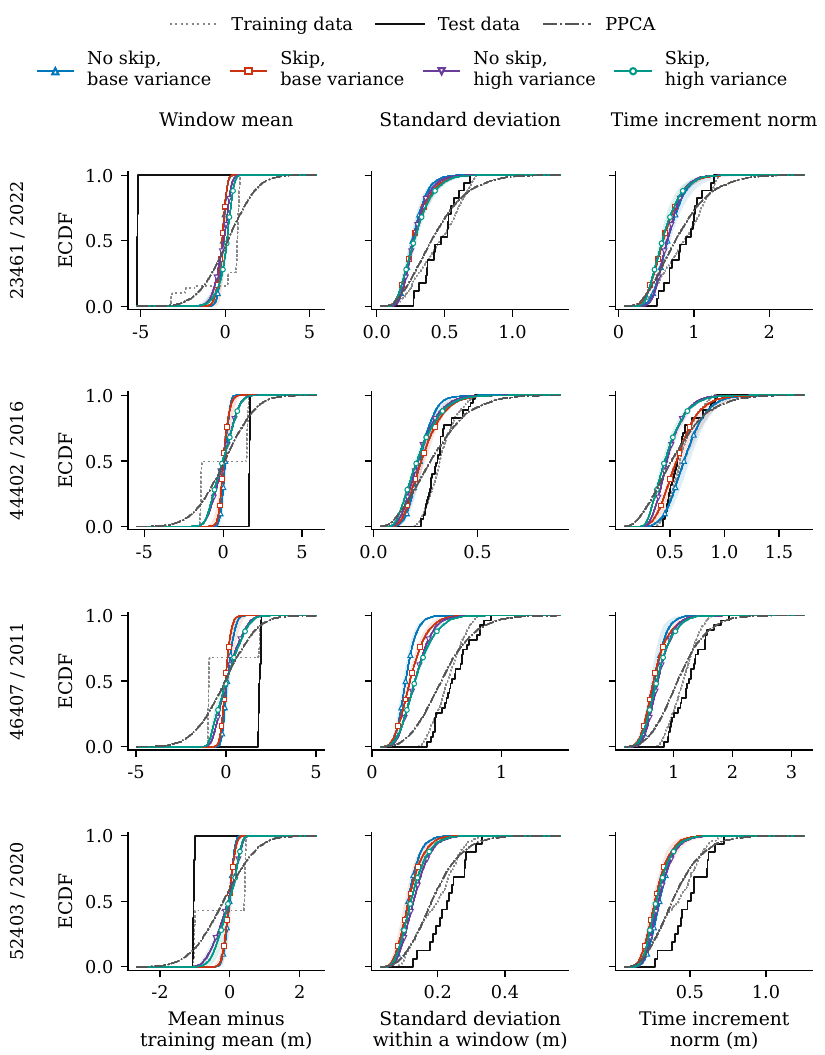}
\caption{Feature ECDFs for the four original DART records, comparing training and test observations with noisy prior samples from the four neural conditions and PPCA. Observed windows have equal weights. Each neural fit uses the network weights chosen by validation and three streams of 1,024 prior samples. After averaging the three ECDFs within each fit, the curve gives the median at each feature value across three fits per condition, and shading spans the full range across training seeds.}
\label{fig:dart_ecdf_historical}
\end{figure}

The ECDFs describe three window features: the difference between the window mean and training mean, the standard deviation within a window, and the Euclidean norm of the time increments in Eq.~\ref{eq:increments}. Multiplication by the training scale expresses each feature in meters. For feature values $a_i$, the ECDF is $\widehat F(t)=\sum_i\tilde w_i\mathbf{1}\{a_i\le t\}$. Weights sum to one separately within each observation partition or model sample set, and all ties at exact feature values remain. Each curve describes one feature's distribution. The same feature distributions can arise from different distributions of complete windows, including different dependence between mean and shape.

\begin{figure}[H]
\centering
\includegraphics[width=\linewidth,height=0.81\textheight,keepaspectratio]{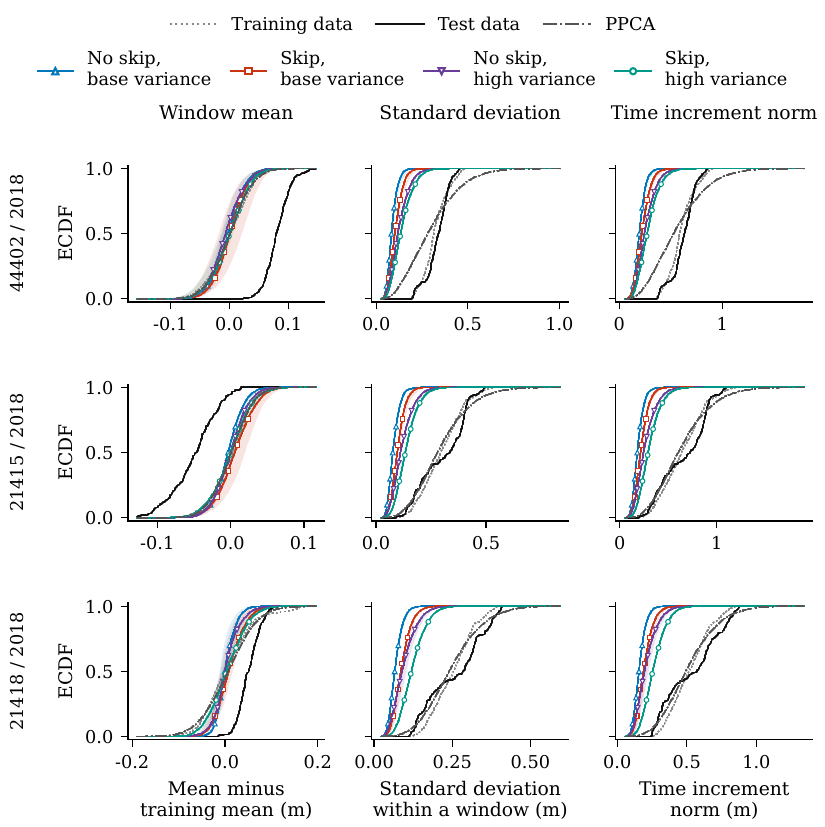}
\caption{Feature ECDFs for the three additional DART records. Training and test windows each use the UTC start hour weights of their own partition; model samples have equal weight. For each neural condition, three sampling streams are averaged within each final fit. The curve at each feature value is the median across three fits, with shading spanning the full range across training seeds. The PPCA curve averages three streams from one baseline fit.}
\label{fig:dart_ecdf_fresh}
\end{figure}

For each spectrum, the window mean is subtracted from 192 observations and every position receives equal weight. The discrete Fourier transform (DFT) gives a one-sided power spectrum by combining positive and negative frequencies: interior positive bins count twice and the Nyquist bin once. The 96 positive bins span 0.5 to 48 cycles/day. Power is in square meters per bin, and its sum equals the window variance; dividing by the bin width gives power spectral density. Powers are averaged in linear units before plotting on logarithmic axes.

\begin{figure}[H]
\centering
\includegraphics[width=\linewidth,height=0.81\textheight,keepaspectratio]{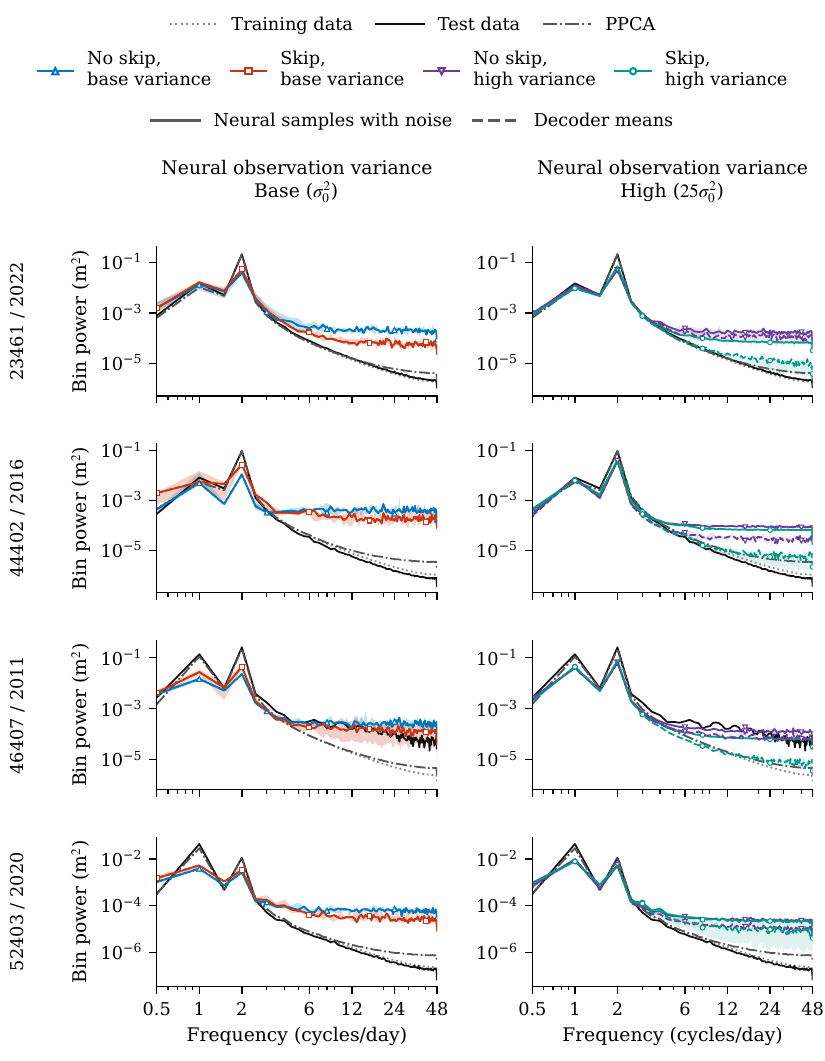}
\caption{DFT bin powers within windows for the four original records. The columns at base variance use $\sigma_0^2$, and those at high variance use $25\sigma_0^2$. Each column includes spectra of training and test observations and the same PPCA fit. Solid neural curves include observation noise; dashed curves use decoder means at prior latent samples. Neural models use network weights chosen by validation. Bin powers are averaged over three streams in linear units within each fit; curves and shading then show the median and full range across training seeds at each frequency. Subtracting each window's mean before the DFT removes variation in window levels. Both axes are logarithmic, and the 0.5 cycle per day bin spacing leaves nearby tidal constituents unresolved.}
\label{fig:dart_spectra_historical}
\end{figure}

\clearpage
The median across training seeds is taken at each frequency separately, so the sum of this median spectrum can differ from the median variance across fits. Dashed neural curves use decoder means evaluated at prior latent samples. The spectra describe variation within windows after removing their mean levels. Tidal oscillations remain, with nearby constituents unresolved at the 0.5 cycle per day bin spacing.

\begin{figure}[H]
\centering
\includegraphics[width=\linewidth,height=0.81\textheight,keepaspectratio]{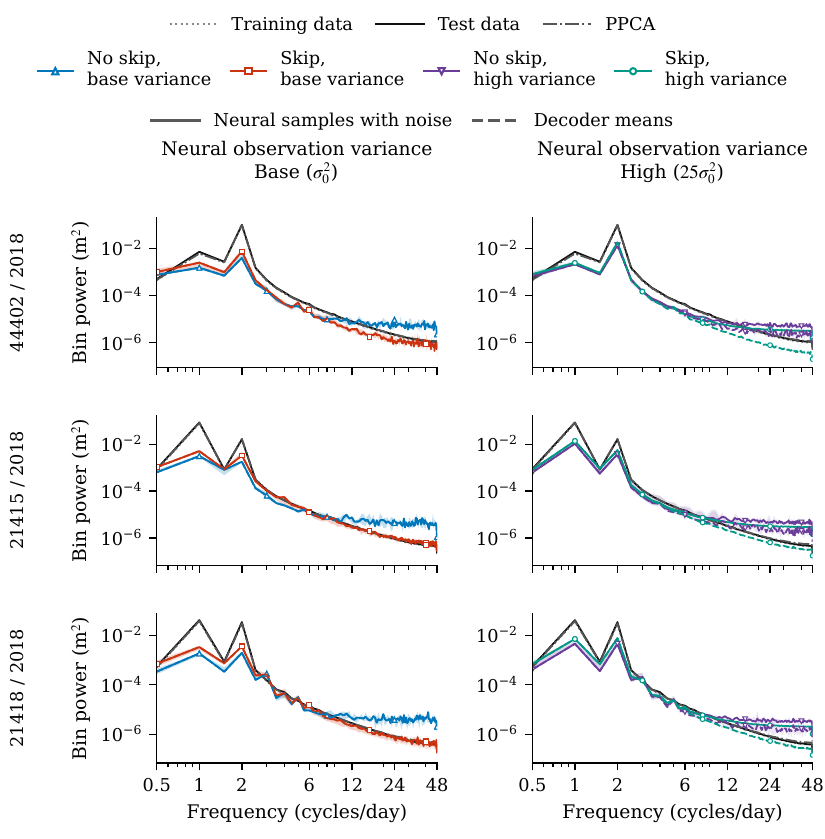}
\caption{DFT bin powers within windows for the three additional records. Training and test spectra use UTC start hour weights computed separately for each partition. Neural spectra use final fits at their original observation variance. Solid curves include observation noise; dashed curves use decoder means at prior latent samples. Within each fit, bin powers are averaged over three streams in linear units. Curves and shading then give the median and full range across training seeds at each frequency. Both columns use the same PPCA fit. Each window's mean is subtracted before the DFT. Both axes are logarithmic, and nearby tidal constituents remain unresolved at the 0.5 cycle per day bin spacing.}
\label{fig:dart_spectra_fresh}
\end{figure}

\clearpage
On the 2018 records, neural prior samples vary less within windows than the observations: both their standard deviations and norms of time increments are smaller. Their spectra contain less power in the prominent peaks at low frequencies and, in several conditions, more power at high frequencies. PPCA reproduces more of the variation, although its feature distributions also differ from the data; at 44402/2018, it gives a broader distribution of window standard deviations. Model class, inference method, and fitting procedure differ between PPCA and the neural fits, so their separate contributions remain unresolved.

\section{Records, tides, and HEIGHT}
\label{sec:scope}
The seven DART records cover six sites, including two years from 44402. The original records contribute 95 test windows and the additional records contribute 773 overlapping test windows. Serial dependence persists within these observations. The comparisons describe the available station records under the data selection and fitting procedures above.

The four original DART records were chosen and inspected before this evaluation. NCEI gives deployment dates of April 23, 2016 for 44402 \citep{NCEIDART44402}, May 29, 2011 for 46407 \citep{NCEIDART46407}, and August 13, 2020 for 52403 \citep{NCEIDART52403}. A deployment can involve sensor replacement or recalibration. All three dates precede their training cutoffs, so these training partitions span deployments and validation and test use later observations. Establishing the causes of these height changes requires evidence beyond deployment dates. Drift, environmental variability, data availability, and the reporting types included can also change the distribution within a deployment period.

In each sequence of consecutive routine observations in an original record, training windows start four hours apart and evaluation windows 48 hours apart. Over 48 hours, the solar semidiurnal S2 phase repeats, the lunisolar diurnal K1 phase nearly repeats, and the lunar semidiurnal M2 phase advances appreciably; a segment break can change the starting phase. Normalization counts each eligible training row once, including rows in fragments too short for a window. Overlapping training windows reuse observations, and test windows receive equal weights. Split exclusions keep source observations in separate partitions, but tidal and longer temporal dependence can extend across the splits.

HEIGHT expresses bottom pressure as equivalent water column height after compensation for sensor temperature effects, using 670 mm per pound per square inch absolute (PSIA) \citep{NDBCDARTDescription,Meinig2005DARTII}. This nominal pressure conversion gives a quantity distinct from surveyed instrument depth and surface elevation. NOAA describes measurements every 15 seconds and routine spot values at 15 minute intervals. Meinig et al.\ describe nominal integration over 15 seconds and separately call the transmitted height an ``average'' \citep[Sections 3.3.1, 3.3.2, and 3.3.8]{Meinig2005DARTII}. The firmware and timestamp conventions of a particular annual record are needed to apply those system descriptions to it. The routine windows used here have observations 900 s apart. Omitting event reporting types can affect which physical conditions the analysis includes.

The 900 s spacing gives a Nyquist frequency of 48 cycles/day, corresponding to a period of 30 minutes. A DFT of 192 observations has bins 0.5 cycles/day apart. Peaks near diurnal and semidiurnal frequencies can be compared, but nearby constituents remain unresolved: the M2/S2 and K1/O1 beat periods are about 14.8 and 13.7 days \citep{NOAA2000TideGlossary}. O1 is the lunar diurnal constituent. Harmonic fitting estimates amplitudes and phases at known constituent frequencies; the DFT plots show power in frequency bins. The reliability of harmonic fitting depends on record length, constituent separation, and noise \citep{Parker2007TidalAnalysis}. Coefficients from a longer calibration record can be applied to shorter windows.

Instrument response and sampling impose separate limits. \citet{EbleGonzalez1991BottomPressure} measured bottom pressure in the deep ocean at 64 samples per hour and described sensitivity at the millimeter scale in principle for oscillations longer than a few minutes; drift and background noise limited practical resolution. Deep water attenuates the bottom pressure response to ordinary wind waves and swell \citep[Section 2.2.1]{Meinig2005DARTII}. Interpreting power at high frequencies requires both the pressure response at the instrument's depth and the effects of finite sampling. The experiments here assess distributions of routine windows. Evaluating noise removal, anomaly classification, or operational warning would require independent evidence about the relevant events or measurement errors.

\end{document}